\documentclass{article}

% if you need to pass options to natbib, use, e.g.:
%     \PassOptionsToPackage{numbers, compress}{natbib}
% before loading neurips_2025

% ready for submission
% \usepackage{neurips_2025}

% to compile a preprint version, e.g., for submission to arXiv, add add the
% [preprint] option:
    \usepackage[preprint]{neurips_2025}

% to compile a camera-ready version, add the [final] option, e.g.:
%     \usepackage[final]{neurips_2025}

% to avoid loading the natbib package, add option nonatbib:
%    \usepackage[nonatbib]{neurips_2025}

\usepackage[utf8]{inputenc} % allow utf-8 input
\usepackage[T1]{fontenc}    % use 8-bit T1 fonts
\usepackage{hyperref}       % hyperlinks
\usepackage{url}            % simple URL typesetting
\usepackage{booktabs}       % professional-quality tables
\usepackage{amsfonts}       % blackboard math symbols
\usepackage{nicefrac}       % compact symbols for 1/2, etc.
\usepackage{microtype}      % microtypography
\usepackage{xcolor}         % colors
\usepackage{enumitem}

% yj add：

\usepackage{amsmath,amsfonts}
\usepackage{algorithmic}
\usepackage{algorithm}
\usepackage{array}
\usepackage{textcomp}
\usepackage{stfloats}
\usepackage{verbatim}
\usepackage{graphicx}
% \usepackage{cite}
% \hyphenation{op-tical net-works semi-conduc-tor IEEE-Xplore}
% updated with editorial comments 8/9/2021

\usepackage{hyperref}       % hyperlinks
% \usepackage{url}            % simple URL typesetting
% \usepackage{booktabs}       % professional-quality tables
% \usepackage{amsfonts}       % blackboard math symbols
% \usepackage{nicefrac}       % compact symbols for 1/2, etc.
% % \usepackage{microtype}      % microtypography
% \usepackage{xcolor}         % colors
\usepackage{amsthm, amsmath}
% \usepackage{graphicx}
% % \usepackage{geometry}
% \usepackage{array}
\usepackage{multirow}

\theoremstyle{plain}
\newtheorem{theorem}{{T{\footnotesize HEOREM}}}[section]
\newtheorem{lemma}[theorem]{{L{\footnotesize EMMA}}}
\newtheorem{proposition}[theorem]{{P{\footnotesize ROPOSITION}}}

\theoremstyle{definition}
\newtheorem{definition}[theorem]{{D{\footnotesize EFINITION}}}

\newtheorem{remark}[theorem]{{R{\footnotesize EMARK}}}

\newcommand{\aA}{\mathcal{A}}

\newcommand{\lL}{\mathcal{L}}

\newcommand{\nN}{\mathcal{N}}

\newcommand{\sS}{\mathcal{S}}

% def mathbb
\newcommand{\NN}{\mathbb{N}}

\newcommand{\RR}{\mathbb{R}}

\newcommand{\op}{{\operatorname{op}}}
\newcommand{\diag}{{\operatorname{diag}}}
\newcommand{\Hom}{{\operatorname{Hom}}}
%\usepackage{ulem}

%\newcommand{\yeying}[1]{\textcolor{black}{#1}}
% Import additional packages in the preamble file, before hyperref
% \input{preamble}
% \definecolor{iccvblue}{rgb}{0.21,0.49,0.74}
% \usepackage[pagebackref,breaklinks,colorlinks,allcolors=iccvblue]{hyperref}
\usepackage{wrapfig} 

\title{Lipschitz-Driven Noise Robustness in VQ-AE for High-Frequency Texture Repair in ID-Specific Talking Heads}

% The \author macro works with any number of authors. There are two commands
% used to separate the names and addresses of multiple authors: \And and \AND.
%
% Using \And between authors leaves it to LaTeX to determine where to break the
% lines. Using \AND forces a line break at that point. So, if LaTeX puts 3 of 4
% authors names on the first line, and the last on the second line, try using
% \AND instead of \And before the third author name.

\author{%
    Jian Yang$^2$ \and 
    Xukun Wang$^2$ \and 
    Wentao Wang$^3$ \and 
    Guoming Li$^2$ \and
    Qihang Fang$^4$ \and
    Ruihong Yuan$^2$ \and
    Tianyang Wang$^3$ \and
    Xiaomei Zhang$^4$ \and
    Yeying Jin$^5$ \and
    {Zhaoxin Fan$^1$}\thanks{Corresponding author. Email: fanzhaoxinruc@gmail.com} \and
}
% \author{%
%   David S.~Hippocampus\thanks{Use footnote for providing further information
%     about author (webpage, alternative address)---\emph{not} for acknowledging
%     funding agencies.} \\
%   Department of Computer Science\\
%   Cranberry-Lemon University\\
%   Pittsburgh, PA 15213 \\
%   \texttt{hippo@cs.cranberry-lemon.edu} \\
%   % examples of more authors
%   % \And
%   % Coauthor \\
%   % Affiliation \\
%   % Address \\
%   % \texttt{email} \\
%   % \AND
%   % Coauthor \\
%   % Affiliation \\
%   % Address \\
%   % \texttt{email} \\
%   % \And
%   % Coauthor \\
%   % Affiliation \\
%   % Address \\
%   % \texttt{email} \\
%   % \And
%   % Coauthor \\
%   % Affiliation \\
%   % Address \\
%   % \texttt{email} \\
% }

\begin{document}
% \column[
    \maketitle
    \vspace{-3em} % 调整标题和正文之间的距离
    \begin{center}
        $^1$Beihang University~~$^2$Psyche AI Inc.~~$^3$The University of Alabama at Birmingham~~$^4$CASIA ~~$^5$Tencent \\
    \end{center}
% ]

\maketitle

\begin{abstract}
Audio-driven IDentity-specific Talking Head Generation (ID-specific THG) has shown increasing promise for applications in filmmaking and virtual reality. 
Existing approaches are generally constructed as end-to-end paradigms, and have achieved significant progress. However, they often struggle to capture high-frequency textures due to limited model capacity. 
To address these limitations, we adopt a simple yet efficient post-processing framework---unlike previous studies that focus solely on end-to-end training---guided by our theoretical insights. 
Specifically, leveraging the \textit{Lipschitz Continuity Theory} of neural networks, we prove a crucial noise tolerance property for the Vector Quantized AutoEncoder (VQ-AE), and establish the existence of a Noise Robustness Upper Bound (NRoUB). 
This insight reveals that we can efficiently obtain an identity-specific denoiser by training an identity-specific neural discrete representation, without requiring an extra network. 
Based on this theoretical foundation, we propose a plug-and-play Space-Optimized VQ-AE (SOVQAE) with enhanced NRoUB to achieve temporally-consistent denoising. 
For practical deployment, we further introduce a cascade pipeline combining a pretrained Wav2Lip model with SOVQAE to perform ID-specific THG.
Our experiments demonstrate that this pipeline achieves \textit{state-of-the-art} performance in video quality and robustness for out-of-distribution lip synchronization, surpassing existing identity-specific THG methods. 
In addition, the pipeline requires only a couple of consumer GPU hours and runs in real time, which is both efficient and practical for industry applications.
\end{abstract}    
\section{Introduction}
\label{sec:intro}

The generation of photo-realistic, speech-driven, IDentity-specific Talking Head Generation (ID-specific THG) holds significant potential across diverse domains, including filmmaking \cite{kim2018deep}, virtual reality \cite{morishima1998real}, and digital avatar creation \cite{thies2020neural}. The goal of this task is to synthesize ID-specific talking head videos that achieve precise lip synchronization while preserving fine-grained details such as hair, facial wrinkles, moles, eyelashes, and the intricate contours of the lips. These high-frequency details are critical for enhancing realism in synthesized videos.

% Different with one-shot THG techniques \cite{prajwal2020lip,zhang2023dinet,shen2023difftalk,wang2023seeing,zhong2023identity},
Existing NeRF-based approaches \cite{peng2023synctalk,guo2021ad,li2023efficient,ye2023geneface,ye2023geneface++} achieve impressive advances in preserving high-fidelity identities. However, these methods typically rely on multilayer perceptrons, which are inherently limited in capturing high-frequency components due to the well-documented \textit{spectral bias} phenomenon: this bias, as stated in \cite{rahaman2019spectral,basri2020frequency,yuce2022structured}, arises from the pathological eigenvalue distribution of the neural tangent kernel \cite{jacot2018neural,tancik2020fourier}, where most eigenvalues are small, limiting convergence on high-frequency components. 
On the other hand, blind face restoration methods~\cite{wang2023dr2,zhou2022towards,zhu2022blind}, common-used post-processing techniques, are capable of enhancing high-frequency textures, but often suffer from temporal inconsistencies such as texture flickering. While fine-tuning these models on ID-specific data can reduce flickering, it may degrade generalization and incurs additional inefficiencies. % (see Fig.~\ref{fig:finetune}).

To address these challenges, we propose a simple yet effective post-processing method for ID-specific THG, grounded in rigorous theoretical insights. 
By analyzing the forward dynamics of VQ-AE {\footnote{We use the term VQ-AE to distinguish it from VQGAN \cite{esser2021taming}, which combines a VQ-AE with a transformer network and emphasizes image synthesis. Here, we focus on the auto-encoder nature of VQ-AE.}}, we uncover the intrinsic noise robustness of VQ-AE and formally derive its Noise Robustness Upper Bound (NRoUB) under the \textit{Lipschitz Continuity} framework \cite{balan2018lipschitz, zou2019lipschitz} (see Sec.~\ref{sec:denosing_modeling}). 
This theoretical result implies that training an ID-specific VQ-AE yields an efficient denoiser, where the denoising process remains lossless as long as the noise perturbation stays below the NRoUB. 
Building on this insight, we introduce a space regularization loss to further improve the NRoUB, resulting in a variant we call Space-Optimized VQ-AE (SOVQAE), which enables temporally consistent denoising (see Fig.~\ref{fig:nois_robustness}).

% and to recovers fine-grained textures while ensuring \textit{temporal consistency} from various base models. 
% Specifically, SOVQAE is a VQ-AE with a space regularization loss (Eq.\ref{loss_regular}) trained on ID-specific data.
% Unlike previous methods focusing on introducing face prior and noise modeling, SOVQAE is only trained in an information compressor manner within a couple of GPU hours. This novel and efficient implementation comes from our theoretical discoveries: 1) With the leverage of the \textit{Lipschitz Continuity} theory of neural networks \cite{balan2018lipschitz, zou2019lipschitz}, we proof that \textit{VQ-based Autoencoder has intrinsic noise robustness in latent space}, which means that we can obtain an ID-specific denoiser via simply training its discrete neural representation. 2) the noise robustness of VQ-AE can be further enhance via our space regularization loss, facilitating temporal-consistency (See Fig.\ref{fig:nois_robustness}).

\begin{wrapfigure}{r}{0.6\textwidth}
\vspace{-0.2cm}
  \centering
  \includegraphics[width=\linewidth]{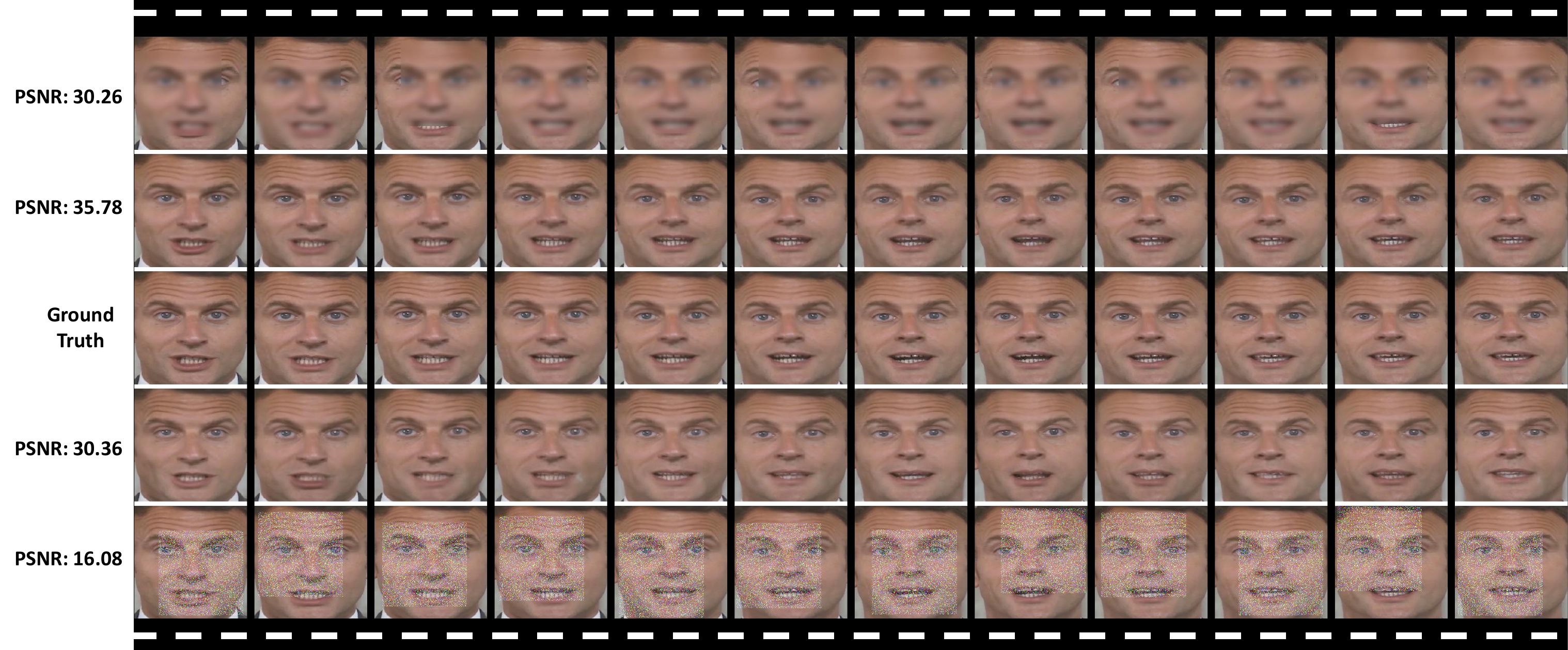}
  \vspace{-0.5cm}
  \caption{Display of the noise robustness performance in \textit{Macron} subject. Please zoom in for better visualization. From top to bottom, there are \textit{Gaussian blurred} video, denoised video by SOVQAE, ground truth video, denoised video and \textit{Gaussian noised} video. We pick 12 continuous frames clip to demonstrate the temporal-consistency.}
  \label{fig:nois_robustness}
  \vspace{-0.2cm}
\end{wrapfigure}

To translate this theoretical foundation into a practical THG pipeline, we design a cascade architecture combining a pretrained Wav2Lip model~\cite{prajwal2020lip}---a lightweight and effective lip synchronization model---with an ID-specific SOVQAE (detailed in Sec.~\ref{sec:method}). 
Empirical results show that our pipeline achieves new \textit{state-of-the-art} performance in both video quality and out-of-distribution lip synchronization accuracy, outperforming existing ID-specific THG methods. 
We attribute this performance to the robust audio-lip synchronization provided by the pretrained model and the high-frequency texture restoration enabled by SOVQAE. 
Moreover, with \textit{half-precision} training and inference, our pipeline runs at 30 FPS on a consumer-grade GPU, demonstrating its efficiency and practicality for industrial deployment.
In summary, our main contributions are as follows:

\begin{itemize}[leftmargin=20pt,parsep=2pt,itemsep=1pt,topsep=0pt]
    \item \textbf{Theoretical Contribution:} We establish the noise robustness of VQ-AE through the lens of Lipschitz continuity theory and derive a formal upper bound, termed the Noise Robustness Upper Bound (NRoUB).
    
    \item \textbf{Methodological Contribution:} We propose SOVQAE, which leverages the derived NRoUB to enable temporally consistent post-processing. Additionally, we introduce an efficient cascade pipeline for identity-specific talking-head generation, combining a pre-trained Wav2Lip model with SOVQAE. The pipeline is simple to train, requiring only a few consumer GPU hours, and operates in real-time inference.

    \item \textbf{Empirical Contribution:} Extensive experiments show that our cascade pipeline achieves state-of-the-art performance in video quality and robustness for out-of-distribution lip synchronization, highlighting its practical utility for real-world deployment.
\end{itemize}
% \vspace{-0.6cm}
\section{Related Work}
% \subsection{Audio-Driven Talking Heads Generation}
\paragraph{Audio-Driven Talking Heads Generation.}
As delineated in \cite{prajwal2020lip,shen2023difftalk,zhang2023dinet,wang2023seeing,zhong2023identity,wang2023lipformer,stypulkowski2024diffused,ma2023dreamtalk}, \textbf{one-shot} methods harness a plethora of talking video clips to capture the generalizability across multiple faces, thereby attaining commendable lip-synchronization performance. 
% For instance, Wav2Lip \cite{prajwal2020lip} pioneers the integration of a lip-sync expert to synchronize audio segments with generated lip movements. DINet \cite{zhang2023dinet}, in accordance with audio features, induces spatial deformations on reference image feature maps, facilitating few-shot generation capabilities. 
Despite these advancements, these methods continue to grapple with maintaining identity consistency across video frames. To counter this limitation, \textbf{ID-specific} NeRF-based methods \cite{peng2023synctalk,li2023efficient,ye2023geneface,guo2021ad,ye2023geneface++,tang2022real} are proposed, focusing exclusively on identity-specific data for training. Although this approach mitigates issues of identity instability, the constrained volume of training data hampers their cross-audio synchronization capabilities. To this end, GeneFace \cite{ye2023geneface} and SyncTalk \cite{peng2023synctalk} enhance their audio generalization performance by leveraging pre-trained audio-to-landmarks predictors and audio encoders \cite{prajwal2020lip} respectively.
Although these initiatives successfully bolster the cross-audio generalization of NeRF methods, instances of cross-audio desynchronization persist.
\paragraph{Face Restoration.} Blind Face Restoration \cite{wang2023dr2,zhou2022towards,gu2022vqfr,zhu2022blind,he2022gcfsr,yang2024pgdiff,suin2024diffuse,wang2021towards,yang2021gan} endeavors to rectify diverse degradations in facial imagery captured in non-ideal conditions, including compression, blur, and noise. Given the unknown nature of the specific degradations, this task is inherently ill-posed. The prevalent approach to address this challenge is to integrate prior knowledge within an encoder-decoder framework:
% For instance, GFPGAN \cite{wang2021towards} capitalizes on the rich and varied priors inherent in a pre-trained face GAN to facilitate blind face restoration. Yang \textit{et al.} \cite{yang2021gan} develop a GAN for high-quality face generation, integrating it into a U-shaped DNN as a prior decoder, thereby attaining enhanced face restoration outcomes. 
% Codeformer \cite{wang2021towards}, utilizes a learned discrete codebook prior, reformulating blind face restoration as a code prediction task, which minimizes the uncertainty and ambiguity associated with the restoration process.
GFPGAN~\cite{wang2021towards} utilizes a pre-trained face GAN's rich priors for blind face restoration. Yang et al.~\cite{yang2021gan} integrate a GAN into a U-shaped DNN for enhanced face restoration. Codeformer~\cite{wang2021towards} uses a learned codebook to predict codes, reducing restoration uncertainty. 
% In our approach, we harness a well-trained VQAE as a facial prior to enhance high-frequency restoration from low-quality face images. 
Through rigorous theoretical analysis and empirical experiments, we establish that the SOVQAE is an effective plug-and-play model for achieving identity-specific, temporally-consistent face restoration, thereby enabling the generation of high-quality talking faces.
% Blind Face restoration\cite{wang2023dr2,zhou2022towards,gu2022vqfr,zhu2022blind,he2022gcfsr,yang2024pgdiff,suin2024diffuse,wang2021towards,yang2021gan} aims to remove various degradation of face images captured in the wild, such as compression, blur, and noise. This task is highly ill-posed as the specific degradation is unknown. Hence the popular paradigm is  to incorporate the prior into an encoder-decoder architecture.
% For example, GFPGAN\cite{wang2021towards} leverages rich and diverse priors encapsulated in a pretrained face GAN for blind face restoration. Yang \textit{et.al.}\cite{yang2021gan} learns a GAN for high-quality face generation and embeds it into a U-shaped DNN as a prior decoder, such that achieves advanced face restoration performance. Besides, VQVAE\cite{van2017neural} is also commonly used as generative prior. For example,  Codeformer\cite{wang2021towards} leverages a learned discrete codebook prior to cast blind face restoration as a code prediction task, which reduces the uncertainty and ambiguity of restoration mapping. In our method, we treat a well-trained VQ-AE as a face prior facilitating the high frequency restoration from low quality face, while we do not devise an extra network to utilize this prior. With extensive theoretical analysis and experiments, we demonstrate SOVQAE is an effective module to achieve identity-specific temporal-consistent face restoration, such that achieving high quality talking face generation.

% \subsection{Vector Quantization (VQ) Family}
\paragraph{Vector Quantization (VQ) Family.} % \textbf{Vector Quantization (VQ) Family.}
Initially, VQ emerged as a technique within the Signal Processing community \cite{gray1984vector,makhoul1985vector,cosman1993using,farvardin1990study}. Mathematically, VQ maps signal spaces $\mathbb{R}^K$ onto a finite set of Voronoi regions $\{{V_n}\}_{n=1}^N$, each associated with an anchor vector $\{{\bf c}_n\}_{n=1}^N \subset \mathbb{R}^c$:
\begin{equation}
V_n=\{{\bf x} \in \mathbb{R}^c \mid \|{\bf x}-{\bf c}_n\|_2 \le \|{\bf x}-{\bf c}_m\|_2, \forall m \neq n\},
\label{Voronoi}
\end{equation}
where $\|\cdot\|_2$ denotes the Euclidean distance. (Fig.~\ref{fig_illus}(a) represents a Voronoi case in 2D space.) Recently, Oord \textit{et al.} \cite{van2017neural} integrated the VQ operation into generative models, sparking a trend of discrete latent representation in the field \cite{zhou2022towards,wang2023lipformer,esser2021taming,guo2023msmc,vali2023interpretable,sadok2024multimodal,jiang2023text2performer,rombach2022high,kaji2023vq}. 
% For instance, Zhou \textit{et al.} \cite{zhou2022towards} trained a GAN-based VQ-AE to learn high-quality (HQ) face priors for blind face restoration. 
% % LipFormer \cite{wang2023lipformer} synthesizes talking faces within a pre-learned VQ space of faces. 
% Esser \textit{et al.} \cite{esser2021taming} proposed an autoregressive transformer model for high-resolution image generation. MSMC-TTS \cite{guo2023msmc} developed a multi-stage, multi-codebook representation for an advanced text-to-speech system.

Our study regards VQ-AE as a mechanism for learning and encoding high-frequency facial details, similar to VQ-based approaches. Unlike methods\cite{vali2023interpretable,sadok2024multimodal,jiang2023text2performer,rombach2022high,kaji2023vq} that require additional networks, VQ-AE inherently possesses noise robustness (Sec.\ref{sec:denosing_modeling}), enabling the recovery of detailed textures from low-quality images. This simplicity makes our method both effective and efficient.
% On one hand, akin to these VQ-based studies, we regard VQ-AE as a mechanism for learning prior knowledge, effectively encoding high-frequency facial information into a discrete dictionary. On the other hand, in contrast to methods \cite{vali2023interpretable,sadok2024multimodal,jiang2023text2performer,rombach2022high,kaji2023vq} that employ additional networks to harness this prior, we demonstrate (see \ref{sec:denosing_modeling}) that VQ-AE possesses noise robustness, enabling the recovery of high-frequency textures by merely passing low-quality faces through the trained VQ-AE. This approach renders our method both straightforward and efficacious.

\section{Noise Robustness of VQ-AE}
\label{sec:denosing_modeling}
% As previously discussed, the cornerstone of our approach is the inherent noise robustness of VQ-AE, which enables the temporally consistent recovery of facial details from blurred images. In this section, we provide a detailed demonstration that the noise robustness of VQAEs is an innate quality, and that its noise tolerance is positively correlated with the ratio of the Lipschitz constant of the CNN encoder to the minimal distance of the codebook vectors and the maximal minimal distance from images latent to codebook vectors, denoted as \( d_C 
% \) and \(\gamma\). 
In this section, we derive the noise robustness of VQ-AE via the Lipschitz Continuity theory, where the definition of lipschitz continuity is following:
% \subsection{Lipschitz Continuity of CNN}
\begin{definition}
\label{def_lip}
     A function $f: \mathbb{R}^n \to \mathbb{R}^m$ is called Lipschitz continuous if there exists a constant $L$ such that 
     \begin{equation}
         \forall x,y \in \mathbb{R}^n, ~\| f(x)-f(y)\|_2 \le L\|x-y\|_2.
     \end{equation} 
\end{definition}
The smallest value of \( L \) for which the preceding inequality holds is referred to as the Lipschitz constant of \( f \). This property guarantees that when a small perturbation \(\Delta x\) is applied to \( x \), the resulting impact on \( f(x) \) is linearly proportional to \(\|\Delta x\|\). This characteristic is more stringent than \textit{uniform continuity} because the Lipschitz constant is independent of specific points in the domain.

\begin{figure*}[t]
  \centering
  \includegraphics[width=1.0\textwidth]{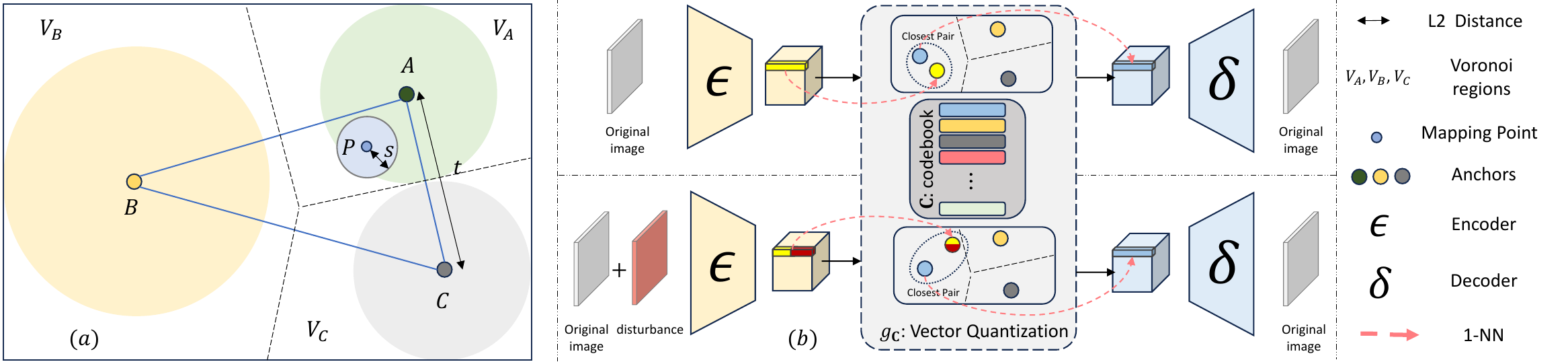}
  \vspace{-0.5cm}
  \caption{(a) Illustrates Voronoi regions in 2D and explains the denoising mechanism in VQ space. Given a mapping point $P$, there exists an open disk $\{ {\bf x}\in V_A\cup V_B\cup V_C : \|{\bf x}-P\|_2 <s \}$, where all points in the disk share the same anchor point $A$. 
  (b) Demonstrates the robustness of VQ-AE under slight input disturbances. When image perturbations occur, the VQ mechanism correctly matches the appropriate codebook vector in the latent space, provided the affected vector remains within the correct Voronoi region (e.g., the yellow-red vector in the figure).}
  \label{fig_illus}
  % \vspace{-0.5cm}
\end{figure*}

As depicted in Fig.\ref{fig_illus}(b), a VQ-AE is encapsulated by the quadruple \( \{ \epsilon, \delta, {\bf C}, g_{{\bf C}} \} \). Here, \( \epsilon \) represents the CNN encoder that maps the image domain \( \mathbb{R}^{3 \times h \times w} \) to the latent space \( \mathbb{R}^{c \times h_o \times w_o} \). The term \( \delta \) signifies the CNN decoder, which reconstructs the image from the latent representation in \( \mathbb{R}^{c \times h_o \times w_o} \). The codebook \( {\bf C} = \{ {\bf c}_n \}_{n=1}^N \in \mathbb{R}^{c \times N} \) comprises \( N \) anchor vectors. Furthermore, \( g_{{\bf C}} \) refers to the channel-wise 1-Nearest Neighbor (1-NN) feature matching operation, responsible for the substitution of latent features with their closest anchor vectors within the codebook.
As demonstrated in recent studies \cite{balan2018lipschitz, zou2019lipschitz}, the following conclusion can be drawn:

\begin{theorem}
\label{theorem_1}
Given a trained VQ-AE $\{ \epsilon, \delta, {\bf C}, g_{{\bf C}}\}$, the $\epsilon : ~\mathbb{R}^{3 \times h \times w} \to \mathbb{R}^{c \times h_o \times w_o}$ is a map with lipschitz continuity such that
\begin{equation}
    \forall x,y \in \mathbb{R}^{3 \times h \times w}, ~\| \epsilon(x)-\epsilon(y)\|_F \le L_{\epsilon}\|x-y\|_F,
\end{equation}
where $\| \cdot \|_F$ is the Frobenius norm of matrix and $L_{\epsilon}$ is the Lipschitz constant of $\epsilon$.
\end{theorem}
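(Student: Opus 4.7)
The plan is to exploit the compositional structure of the CNN encoder $\epsilon$ and the elementary fact that a composition of Lipschitz continuous maps is itself Lipschitz continuous, with constant equal to the product of the individual constants. Concretely, I would write $\epsilon = \epsilon_K \circ \epsilon_{K-1} \circ \cdots \circ \epsilon_1$, where each $\epsilon_k$ is one of the standard building blocks: a convolution, a pointwise nonlinearity, a normalization layer, or a (strided) down-sampling / pooling operation. The target inequality in Frobenius norm will then follow once every block is shown to be Lipschitz continuous with respect to $\|\cdot\|_F$ on the appropriate tensor space.

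Next I would verify the per-layer Lipschitz property, which is essentially a calculation for each block type. A convolutional layer is a bounded linear map between finite-dimensional spaces; viewed as a matrix acting on vectorized tensors, its Lipschitz constant with respect to $\|\cdot\|_F$ is bounded by its spectral/operator norm, which is finite once the weights have been trained and fixed. Standard activations used in VQ-AE (ReLU, leaky ReLU, GELU, Swish, etc.) are all Lipschitz with an explicit constant (for ReLU it is $1$); a pointwise activation applied to each tensor entry is Lipschitz with the same constant in the Frobenius norm because $\|\sigma(x)-\sigma(y)\|_F^2 = \sum_i |\sigma(x_i)-\sigma(y_i)|^2 \le L_\sigma^2 \sum_i |x_i-y_i|^2$. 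Batch/group/instance normalization at inference time reduces to an affine rescaling with fixed, finite parameters, hence is bounded linear and therefore Lipschitz. Pooling and strided subsampling are either linear (average pooling) or 1-Lipschitz (max pooling, which is Lipschitz since $|\max(a,b)-\max(a',b')| \le \max(|a-a'|,|b-b'|)$). Setting $L_\epsilon := \prod_{k=1}^{K} L_k$ then yields the claim
\begin{equation}
\|\epsilon(x)-\epsilon(y)\|_F \le L_\epsilon \|x-y\|_F.
\end{equation}

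Finally, I would remark that the argument relies crucially on the encoder being \emph{already trained}, so that every weight, bias, and normalization statistic is fixed; this turns each layer into a deterministic, finite-parameter map to which the elementary bounds above apply directly. The hard part, if any, is not the composition step but the careful bookkeeping across heterogeneous layer types and tensor reshapings, since one must check that the Frobenius norm is preserved as the relevant metric throughout (in particular, vectorization of a tensor does not alter its Frobenius norm, so switching between $\mathbb{R}^{c\times h_o\times w_o}$ and $\mathbb{R}^{c h_o w_o}$ is harmless). This is largely routine once the per-layer constants are identified, and the cited works \cite{balan2018lipschitz, zou2019lipschitz} can be invoked to package the per-layer bounds into the stated global constant $L_\epsilon$.
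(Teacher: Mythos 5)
Your proposal is correct and follows the same overall strategy as the paper: decompose the encoder $\epsilon$ into elementary layers and take the product of per-layer Lipschitz constants. Where you diverge is in the treatment of the convolutional layer. You invoke the generic fact that a convolution is a finite-dimensional linear map and therefore has finite operator norm; that is all that is needed to establish \emph{existence} of $L_\epsilon$, which is what the theorem literally asserts. The paper instead spends most of its proof deriving an explicit, computable bound on that operator norm: it identifies the convolution with a block matrix indexed by input/output channels, proves a block-matrix lemma $\|\aA\|_\op \le \sqrt{mn}\max_{i,j}\|A_{ij}\|_\op$, and then bounds each block's operator norm via the spectral radius of an associated positive-definite Toeplitz Gram matrix, using the Fourier-series/essential-supremum bound $\rho(C)\le M_{|f_C|}$. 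This extra machinery is not needed to prove Theorem~3.1 as stated, but it is what makes the constant $L_\epsilon$ (and hence the NRoUB $\frac{d_C - 2\gamma}{2L_\epsilon}$ in Theorem~3.2) concretely interpretable and optimizable, which is the point of the downstream SOVQAE design. One small thing your proposal handles that the paper glosses over: you explicitly check normalization and pooling layers, whereas the paper's appendix only treats convolutions and activations; your remark that batch/group norm becomes a fixed affine map at inference time is a detail worth keeping.
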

We provide the proof in Appendix.\ref{append_3.1}. Let \({\bf V}_{low} \subset \mathbb{R}^{T \times 3 \times h \times w}\) denote the noised version of \({\bf V}_{high}\), where \({\bf V}_{high}\) training video of VQ-AE. 
% It is evident that each image in \({\bf V}_{low}\) can be viewed as a degraded HQ image. 
% Given that the primary defect in \({\bf V}_{up}\) is blurring, we naturally assume that this degradation can be modeled as \textit{Gaussian blur}. 
Without loss of generality, for any \({\bf I}_{low} \in {\bf V}_{low}\) and \({\bf I}_{high} \in {\bf V}_{high}\), we assume that \({\bf I}_{high} + {\bf \mathcal{N}} = {\bf I}_{low}\) holds, where \({\bf \mathcal{N}}\) represents the numerical disturbance caused by any degradation. Consequently, by Theorem~\ref{theorem_1}, we derive the following results:
\begin{equation}
\epsilon({\bf I}_{low}) \in \{x \in \mathbb{R}^{c \times h_o \times w_o} : \|x-\epsilon({\bf I}_{high})\|_F \le L_{\epsilon}\|\mathcal{N}\|_F\}. 
\end{equation}
This implies that the $\epsilon(\mathbf{I}_{\text{low}})$ resides within the interior of a hypersphere, with the point $\epsilon(\mathbf{I}_{\text{high}})$ as its center and a radius of $L_{\epsilon}\|\mathcal{N}\|_F$, in the measure of the Frobenius norm. Furthermore, for each channel of the latent vector, denoted as $\epsilon(\mathbf{I}_{\text{low}})_{i,j}$ and $\epsilon(\mathbf{I}_{\text{high}})_{i,j} \in \mathbb{R}^c$, we have that:
\begin{equation}
\|\epsilon({\bf I}_{low})_{i,j}-\epsilon({\bf I}_{high})_{i,j}\|_2 < L_{\epsilon}\|\mathcal{N}\|_F.
\end{equation}
This further indicates an important and evident fact: 
\begin{theorem}
\label{theorem_2}
In a trained VQ-AE $\{ \epsilon, \delta, {\bf C}, g_{{\bf C}}\}$, we define $d_C={\rm min}\{\|{\bf c}_i-{\bf c}_j\|_2~:~{\bf c}_i,{\bf c}_j \in {\bf C} ~and~ i \neq j\}$ as the minimal distance between anchors of codebook ${\bf C}$, $\gamma$ is the maximal distance of training image latent to closest anchor in all channel, and $L_{\epsilon}$ is the Lipschitz constant of $\epsilon$ . when $\| \mathcal{N} \|_F < \frac{d_C-2\gamma}{2 L_{\epsilon}}$ holds, we have: 
\begin{equation}
    g_{\bf C}(\epsilon({\bf I}_{low}))=g_{\bf C}(\epsilon({\bf I}_{high})).
\end{equation}
\end{theorem}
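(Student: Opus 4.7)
The aim is to show that the channel-wise 1-NN operator $g_{\bf C}$ selects the same anchor at every spatial position $(i,j) \in \{1,\dots,h_o\}\times\{1,\dots,w_o\}$ when fed $\epsilon({\bf I}_{low})$ versus $\epsilon({\bf I}_{high})$. Fixing such an $(i,j)$ and letting ${\bf c}_k$ denote the anchor matched to $\epsilon({\bf I}_{high})_{i,j}$, the definition of $\gamma$ gives $\|\epsilon({\bf I}_{high})_{i,j} - {\bf c}_k\|_2 \le \gamma$; it therefore suffices to prove that ${\bf c}_k$ strictly beats every competitor ${\bf c}_m$ ($m \ne k$) in Euclidean distance to $\epsilon({\bf I}_{low})_{i,j}$.

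The first ingredient I would invoke is the per-channel consequence of Theorem~\ref{theorem_1}: since $\|\epsilon({\bf I}_{low}) - \epsilon({\bf I}_{high})\|_F \le L_{\epsilon}\|\mathcal{N}\|_F$ and the 2-norm of any single slice of a tensor is dominated by the Frobenius norm of the whole tensor, we obtain $\|\epsilon({\bf I}_{low})_{i,j} - \epsilon({\bf I}_{high})_{i,j}\|_2 \le L_{\epsilon}\|\mathcal{N}\|_F$ at every $(i,j)$. The second ingredient is a reverse-triangle bound on the codebook side: for any $m \ne k$,
\begin{equation*}
\|\epsilon({\bf I}_{high})_{i,j} - {\bf c}_m\|_2 \ge \|{\bf c}_k - {\bf c}_m\|_2 - \|\epsilon({\bf I}_{high})_{i,j} - {\bf c}_k\|_2 \ge d_C - \gamma,
\end{equation*}
by the very definitions of $d_C$ and $\gamma$.

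Two more applications of the triangle inequality then give the paired estimate
\begin{equation*}
\|\epsilon({\bf I}_{low})_{i,j} - {\bf c}_k\|_2 \le \gamma + L_{\epsilon}\|\mathcal{N}\|_F, \qquad \|\epsilon({\bf I}_{low})_{i,j} - {\bf c}_m\|_2 \ge (d_C - \gamma) - L_{\epsilon}\|\mathcal{N}\|_F.
\end{equation*}
Demanding that the upper bound be strictly smaller than the lower bound for every competitor $m$ reduces algebraically to $2 L_{\epsilon}\|\mathcal{N}\|_F < d_C - 2\gamma$, which is precisely the hypothesis. Because this chain of inequalities is uniform in both the spatial index $(i,j)$ and the competitor index $m$, the channel-wise assignments on $\epsilon({\bf I}_{low})$ and $\epsilon({\bf I}_{high})$ must coincide, and the desired identity $g_{\bf C}(\epsilon({\bf I}_{low})) = g_{\bf C}(\epsilon({\bf I}_{high}))$ follows.

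The argument is essentially an exercise in Voronoi geometry, so I do not expect a serious obstacle. The only care-requiring step is the descent from the global Frobenius-norm Lipschitz estimate on the latent tensor to the per-slice 2-norm bound used in the triangle-inequality arithmetic; this is immediate from $\|T_{:,i,j}\|_2 \le \|T\|_F$ but should be stated explicitly to avoid conflating the two norms. It is also worth flagging that the conclusion is vacuous unless $d_C > 2\gamma$, i.e., the codebook spacing must exceed twice the worst-case in-distribution quantization error, which is an implicit health condition on a converged identity-specific VQ-AE and naturally motivates the space-regularization loss proposed in the sequel.
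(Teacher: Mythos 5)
Your proof is correct and follows essentially the same route as the paper's: descend from the Frobenius Lipschitz bound to a per-channel $\ell_2$ bound, then use the definitions of $d_C$ and $\gamma$ together with triangle-inequality arithmetic to show the nearest-anchor assignment cannot change. The only cosmetic difference is that the paper compares $\|\epsilon(\mathbf{I}_{low})_{i,j}-\mathbf{c}_k\|_2$ and $\|\epsilon(\mathbf{I}_{low})_{i,j}-\mathbf{c}_m\|_2$ against the fixed threshold $d_C/2$ (using a short contradiction for the second), whereas you compare the two distances directly; both reduce to the same hypothesis $2L_\epsilon\|\mathcal{N}\|_F<d_C-2\gamma$.
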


This is what we refer to as the \textit{noise robustness/tolerance} mechanism of the VQ-AE, and we term the $\frac{d_C-2\gamma}{2 L_{\epsilon}}$ as Noise Robustness Upper Bound (NRoUB). Specifically, the anchors $\{ \mathbf{c}_n \}_{n=1}^N$ partition the latent domain into $N$ high-dimensional Voronoi regions as defined by Eq.~\eqref{Voronoi}. Let the nearest anchor to $\epsilon(\mathbf{I}_{\text{high}})_{i,j}$ be $\mathbf{c}_n$, and $V_n$ be its corresponding Voronoi region. Theorem~\ref{theorem_1} ensures that $\epsilon(\mathbf{I}_{\text{low}})_{i,j}$ remains within $V_n$, provided that $\| \mathcal{N} \|_F < \frac{d_C-2\gamma}{2 L_{\epsilon}}$ is satisfied, thereby enabling the $g_{\mathbf{C}}$ operation to correctly assign $\mathbf{c}_n$ to $\epsilon(\mathbf{I}_{\text{low}})_{i,j}$. This condition is valid for each channel of $\epsilon(\mathbf{I}_{\text{low}})$, and thus Theorem~\ref{theorem_2} applies, allowing the decoder $\delta$ to generate high quality faces without loss. The complete proof of Theorem~\ref{theorem_2} is detailed in Appendix~\ref{append_3.2}. 

% \begin{figure}[t]
%     \centering
%     \includegraphics[width=0.9\textwidth]
%     {figures/noise_robustness.pdf}
%     \caption{Display of the noise robustness performance in \textit{Macron} subject. Please zoom in for better visualization. From top to bottom, there are \textit{Gaussian blurred} video, denoised video by SOVQAE, ground truth video, denoised video and \textit{Gaussian noised} video. For each video, we pick 12 continuous frames clip to demonstrate the temporal-consistency of our method.}
%     \label{fig:nois_robustness}
% \end{figure}

As shown in Fig.~\ref{fig:nois_robustness}, we validate the noise robustness on a brief talking sequence with two common video degradation types: Gaussian blurring and Gaussian noise. For each frame, a $192 \times 192$ pixel area is randomly selected and subjected to these noise conditions. The increase in PSNR from 30.26 to 35.78 following Gaussian blurring confirms the temporal validity of Theorem~\ref{theorem_2}. Even under more semantically destructive Gaussian noise, our method adeptly recovers detail and maintains temporal consistency, with PSNR rising from 16.08 to 30.36. These observations underscore the robustness of VQ-AE to noise and affirm its effectiveness.

\paragraph{Application insights for High-Frequency Texture Repair in ID-specific THG.} 

On the one hand, in ID-specific THG, generating realistic audio-driven videos with high-frequency details is challenging to existing end-to-end methods. A common way to enhance face details is doing face restoration. However existing face restoration models fall short in temporal-consistency. 
Hence, a feasible post-processing method for ID-specific THG is to enhance texture with temporal-consistency.

On the other hand, our aforementioned theoretical analysis highlights two important and interesting conclusions: 

\begin{itemize}[leftmargin=20pt,parsep=2pt,itemsep=1pt,topsep=0pt]
    \item \textit{An ID-specific denoiser can be obtained through training the ID-aware neural discrete representation (VQ-AE);}
    \item \textit{The denoising performance can be temporal-consistent, when the noise disturbance is below the NRoUB.}
\end{itemize}

Therefore, if we can increase the NRoUB of VQ-AE, we can obtain the ideal post-processing model for ID-specific THG. To this end, we introduce a space regularization loss to enhance the NRoUB of VQ-AE (termed as  SOVQAE) in next section, and illustrate its combination with pretrained Wav2Lip showing its application potential.

% In our experiments, we observe that a mere 5 minutes of training data suffices for achieving generalization. Furthermore, Theorem~\ref{theorem_2} posits that the noise tolerance capability is positively correlated with the ratio $\frac{d_{C}-2\gamma}{2 L_{\epsilon}}$. To augment this ratio without compromising the denoising efficacy, we introduce an effective regularization loss, as delineated in Equation~\eqref{loss_regular}.
% \subsection{Noise Robustness of VQ-AE}

\section{Space-Optimized VQ-AE and Application}
\label{sec:method}
In this section, we firstly introduce a feasible space regularization loss to enhance the NRoUB, building our SOVQAE, and illustrate how the pretrained Wav2Lip combines with our SOVQAE to build state-of-the-art ID-specific THG pipeline. 

\begin{wrapfigure}{r}{0.6\textwidth}
  \centering
  \includegraphics[width=0.6\textwidth]{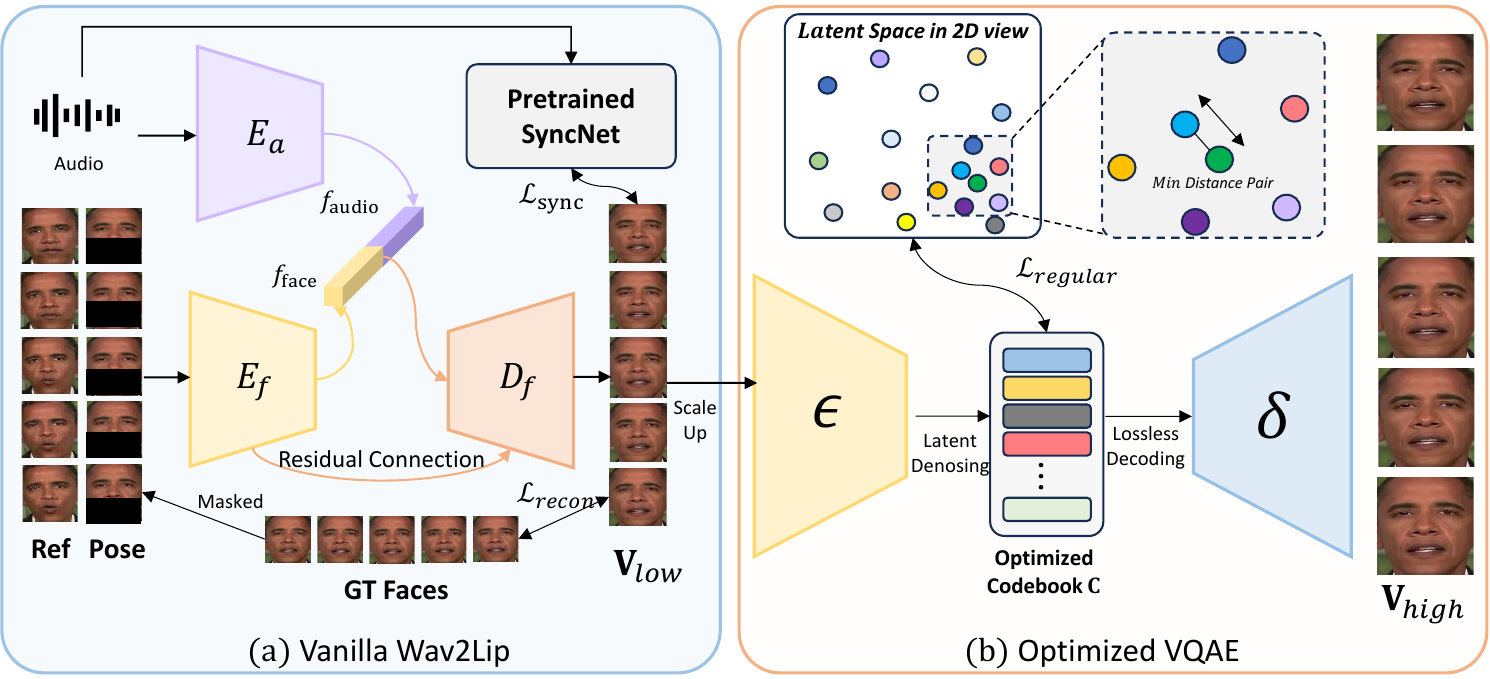}
  \vspace{-0.2cm}
  
  \caption{Illustration of cascade pipeline: (a) displays the overview of  Wav2Lip; (b) displays the latent denosing process of SOVQAE and the mechanism of regularization loss. Please zoom in for better visualization.}
  \label{fig_overview}
  % \vspace{-1.0em}
\end{wrapfigure}

\subsection{Space Optimized VQ-AE}
% \noindent{\textbf{Space Optimized VQ-AE}}

According to Theorem~\ref{theorem_2}, perfect denoising is achievable when $\|\mathcal{N}\|_F < \frac{d_C-2\gamma}{2 L_{\epsilon}}$. Hence our goal is increasing the NRoUB thus to enhance the noise robustness of VQ-AE. {Note} that $\gamma$ is much close to zero, because $\gamma$ is optimized via VQ loss (see Eq.~\ref{loss_vq}) in training. Hence, it seems optimal to maximize $d_C$ while minimizing $L_{\epsilon}$. 
However, this is theoretically infeasible, because $L_{\epsilon}$ is positively correlated with the operation norm of convolutional kernels (according to our proof of Theorem \ref{theorem_1}), implying that minimizing $L_{\epsilon}$ equates to L2/L1 parameter regularization~\cite{goodfellow2016deep}. Such regularization would globally compress the latent space distribution, consequently compressing the distribution of anchors in the codebook $\mathbf{C}$ and reducing $d_C$, thus to degrade NRoUB, as demonstrated in Tab.~\ref{tab:ablation_reg}.

Therefore, the collect optimization is that increasing $d_C$ while maintaining $L_{\epsilon}$. 
In experiments, we find a practical solution is to \textit{locally} extend the $d_C$ with a given lower bound. Thus, the codebook regularization loss we propose is:
\begin{equation}
\mathcal{L}_{\text{regular}} = \|d_{C} - \theta\|_2,
\label{loss_regular}
\end{equation}
where $\theta$ represents the given lower bound. This regularization strategy effectively improves local noise robustness while maintaining global robustness, which will be validated via ablation studies.
In addition to this regularization loss, the complete loss function of SOVQAE comprises L2 reconstruction loss, VQ loss\cite{esser2021taming}, perceptual loss, and GAN loss with a patch-based discriminator\cite{isola2017image}. The VQ loss is following:
\begin{equation}
\begin{aligned}
    \mathcal{L}_{\text{VQ}}(\epsilon,\delta,\mathbf{C})&=\|x-\hat{x}\|^2+\|\text{sg}[\epsilon(x)]-g_{\textbf{C}}(\epsilon(x))\|_2^2 \\
    &+\|\text{sg}[g_{\textbf{C}}(\epsilon(x))]-\epsilon(x))\|_2^2,
    \label{loss_vq}
\end{aligned} 
\end{equation}
where $x$ is the input image, $\hat{x}=\delta(g_{\textbf{C}}(\epsilon(x)))$ is the reconstructed image, $\text{sg}(\cdot)$ denotes the stop-gradient operation following\cite{esser2021taming}. The later two components of Eq.~\ref{loss_vq} are utilized to optimized the $\gamma$.

% The primary limitations of Wav2Lip are its low resolution ($96 \times 96$) and the associated blurring effects. To address these issues, the SOVQAE is designed to consistently upscale LQ faces to HQ faces. Consequently, the SOVQAE should act as a lossless compression technique for talking faces, equipped with a robust noise tolerance capability.

% To overcome the resolution limitation, we employ face detection using S3FD \cite{zhang2017s3fd} to extract facial regions from a talking head video of a specific identity. These extracted regions are subsequently resized and used to train the VQ-AE. This method allows the codebook to concentrate on capturing the high-frequency facial components. In inference, for given LQ faces, we upscale them to $256 \times 256$ before subjecting them to the trained VQ-AE for latent denoising.

% As illustrated in Fig.~\ref{fig_overview}, LaDTalk contains two parts: a vanilla Wav2Lip model synthesis lip-synchronized audio-driven LQ talking faces; an optimized VQ-AE temporal-consistently lifts LQ faces to HQ faces. 

\subsection{Cascade ID-specific THG pipeline}
% \noindent{\textbf{Cascade ID-specific THG pipeline}}

\noindent{\textbf{The Choice of Base Model.}}
In ID-specific THG, an important demanding is the generality with Out-of-Domain (OOD) audios. While end-to-end methods~\cite{ye2023geneface++, peng2023synctalk} make significant progress on it, they still have gap with great OOD audio generality, as their limited training data.  Furthermore, while Wav2Lip~\cite{prajwal2020lip} has been proposed for years, it still outperforms with both recent one-shot~\cite{tan2024edtalk, kim2025moditalker} and ID-specific~\cite{xie2025pointtalk, yu2024gaussiantalker} methods and run in real time. It is an ideal lip-sync prior base model for a post-processing framework. As shown in Tab.~\ref{tab:origin_voice}, this combination makes outstanding lip-sync performance on OOD audios, validating our choice of base model.

\noindent{\textbf{Cascade Pipeline.}}
As shown in Fig.~\ref{fig_overview}, Our SOVQAE is Cascaded with pretrained Wav2Lip, and we kindly refer readers to obtain more details from \cite{prajwal2020lip} due to the limited space.
To adapt with outputs of Wav2Lip, we employ face detection using S3FD \cite{zhang2017s3fd} to extract facial regions from a talking head video of a specific identity. These extracted regions are subsequently resized to $256 \times 256$ for training SOVQAE. This strategy allows the codebook to concentrate on capturing the high-frequency facial components. In inference, for given Low resolution faces ($96\times 96$) from Wav2Lip, we upscale them to $256 \times 256$ before subjecting them to SQVQAE. 

\noindent{\textbf{Efficiency and Plug-and-Play.}} With the \textit{half precision} training and inference, our cascade pipeline achieves 30 FPS inference speed on consumer GPU, which demonstrates its feasibility in industry. Total training takes 5 GPU hours on NVIDIA 4090, which is also comparable with end-to-end counterparts. As shown in Fig.~\ref{fig:app_add_exp}, trained SOVQAE can also adapt for different base models, since it works as a denoiser.

\begin{figure}[h]
  \centering
  \includegraphics[width=0.95\textwidth]{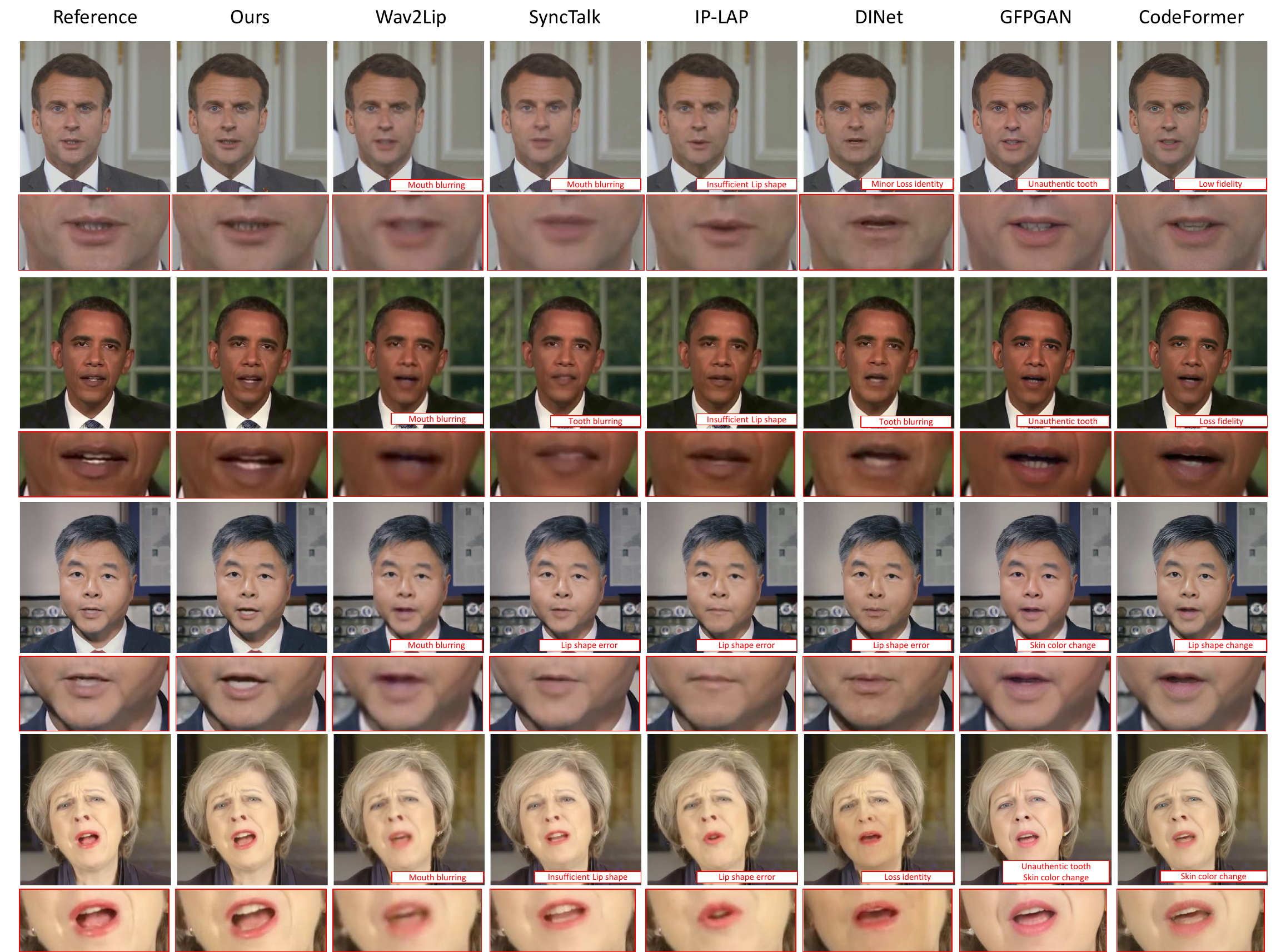}
  \vspace{-0.2cm}
  \caption{The comparative visualization presents a curated selection of discrete keyframes and corresponding sub-frames of lower-half faces. Please zoom in for better visualization.
  }
  \label{fig:qualitative}
  \vspace{-0.5cm}
\end{figure}

\section{Experiments}
% \begin{wrapfigure}{r}{1.0\textwidth}
\textbf{Dataset.} For comparative analysis, we employ the same set of meticulously edited video sequences utilized in previous works \cite{guo2021ad,li2023efficient,ye2023geneface}, encompassing both English and French dialogues. The average duration of these videos is approximately 8,843 frames. We refer to this collection as the \textit{Usual Dataset} to differentiate it from our proprietary \textit{HF videos}. These videos are characterized by their rich textures and high resolution ($1024\times 1024$). The average length of these videos is around 11,000 frames. More details of them can be found in Appendix.
All videos are centered on individual characters and captured at a frame rate of 25 FPS. Consistent with prior research, we adopt a train-to-test split ratio of $10:1$. We empirically set $\theta = 1$. Moreover, to assess the generalizability of our methods, we conduct additional tests using a set of out-of-domain (OOD) audio. This OOD audio comprises 10 diverse voice samples, spanning various languages and genders. \textit{We place more implementation details in Appendix, due to the limited space.}
% Besides, more implementation  details and discussion are in supplementary materials.
% Regarding the regularization loss presented in Equation~\ref{loss_regular}, we have empirically determined the value of $\theta$ to be 1. 

\textbf{Comparison Baselines.}
We conduct a comparative analysis of our method with a total of three one-shot approaches, which include Wav2Lip \cite{prajwal2020lip}, DINet \cite{zhang2023dinet}, IP-LAP \cite{zhong2023identity}, and three ID-specific THG methods, namely SyncTalk \cite{peng2023synctalk}, GeneFace \cite{ye2023geneface} and GeneFace++ \cite{ye2023geneface++}. Furthermore, to underscore the significance of our identity-specific face restoration concept, we establish two post-processing baselines using Codeformer \cite{zhou2022towards} and GFPGAN \cite{wang2021towards}.

\textbf{Evaluation Metrics.}
To assess the quality of the generated images, we employ the Peak Signal-to-Noise Ratio (PSNR) and the Frechet Inception Distance (FID) \cite{heusel2017gans} as metrics. For evaluating lip synchronization, we utilize the Lip Sync Error Confidence (LSE-C) and Lip Sync Error Distance (LSE-D) scores.

\subsection{Qualitative Comparison}
To provide a more intuitive assessment of image quality, we present a comparative analysis of our method alongside alternative approaches in Fig.~\ref{fig:qualitative}. Specifically, we contrast our method with both end-to-end methodologies, including Wav2Lip~\cite{prajwal2020lip}, SyncTalk~\cite{peng2023synctalk}, IP-LAP~\cite{zhong2023identity}, and DINet~\cite{zhang2023dinet}, as well as two post-processing techniques, GFPGAN~\cite{wang2021towards} and CodeFormer~\cite{zhou2022towards}.

\begin{wrapfigure}{r}{0.5\textwidth}
  \centering
  \includegraphics[width=0.5\textwidth]{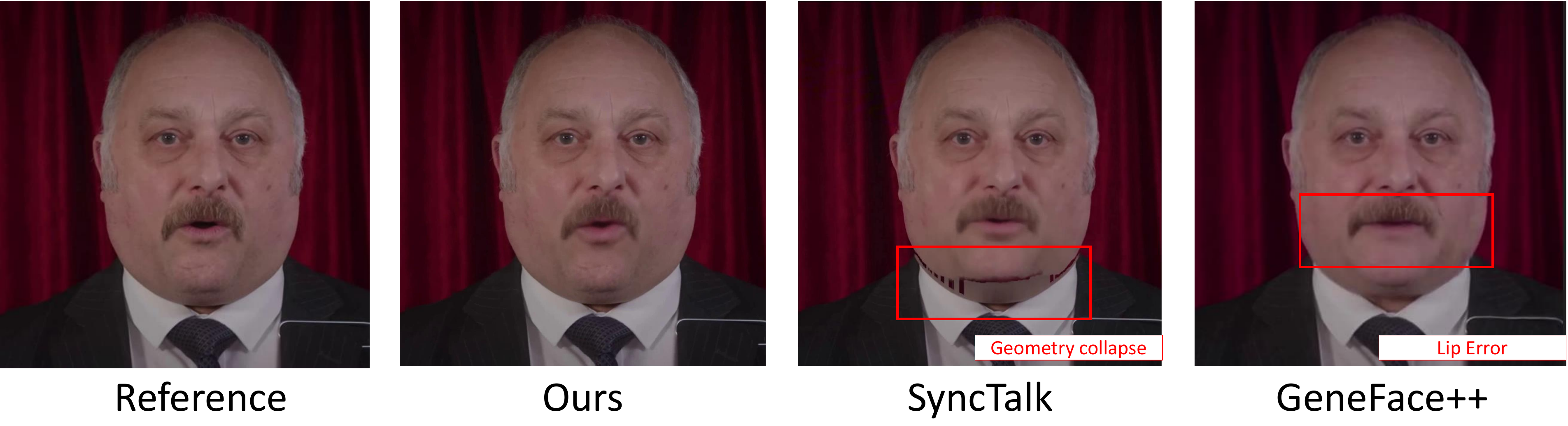}
  \vspace{-0.5cm}
  \caption{ Comparison on HF videos. Except for the collapse and lip error (see red boxes), NeRF-based methods also has limitation in stability of face scale. See supplemental videos for more details.
  }
  \label{fig:1080}
  % \vspace{-1.0em}
\end{wrapfigure}
In summary, the qualitative comparison underscores that our method surpasses all compared methods in terms of facial texture and lip-synchronization. 
% \textbf{\textit{Note}} that partly qualitative comparison of HF videos can be found in \ref{plug-and-play}.
In Fig.~\ref{fig:qualitative}, our method outperforms SyncTalk, a state-of-the-art NeRF-based approach, by better capturing high-frequency facial details. SyncTalk's struggle to maintain these details, particularly in dynamic scenes, results in blurred mouth and teeth regions. This is attributed to NeRF's limited expressivity for high-frequency details. In contrast, our method, utilizing SOVQAE, effectively restores individual-specific textures, enhancing fidelity.
% Compared to Wav2Lip, our method not only preserves lip shape but also achieves more realistic facial textures, indicating superior accuracy. 
The 7th and 8th columns of the figure, where our method's identity-specific restoration approach is superior to the two post-processing baselines, which alter the subject's identity due to information leakage from a large OOD faces dataset~\cite{karras2019style}. This underscores the necessity of our identity-specific restoration approach.
Besides, in more challenging HF videos, our method has more stable performance. In contrast, NeRF-based methods occur geometry collapse (see the 3-th cloumn in Fig.\ref{fig:1080}) and lip error, when scale their resolution to higher (from 512 to 1024).
% In ~\ref{fig:qualitative}, a sequence of four frames and four sub-frame of lower-half faces from various methods demonstrates comparative identity retention capabilities. Our proposed pipeline maintains robust identity stability, in contrast to the 'Wav2Lip + previous face restoration model' pipeline, which shows noticeable identity changes. This is due to the incorporation of a substantial number of OOD faces in their training datasets and their network design tailored for single-image restoration. Our pipeline, however, maintains identity fidelity and stability in facial restoration, as shown by consistent tooth shape and facial features across continuous frames.

% To provide a more comprehensive evaluation of the proposed model, we design an exhaustive user study questionnaire. We extract 24 video clips, each lasting more than 10 seconds, which include head movement, facial expressions, and lip movement. Each method is represented by three of these clips. We invite 35 participants to provide scores. The questionnaire is designed using the Mean Opinion Score (MOS) scoring protocol, asking participants to rate the generated videos from five perspectives: (1) Lip sync Accuracy, (2) Image Quality, and (3) Video Realness. We compute the average score for each method, and the results are shown in Tab. X. We have the following observations:

\subsection{Quantitative Comparison}
\begin{wrapfigure}{r}{0.4\textwidth}
\vspace{-0.5cm}
  \centering
  \includegraphics[width=0.4\textwidth]{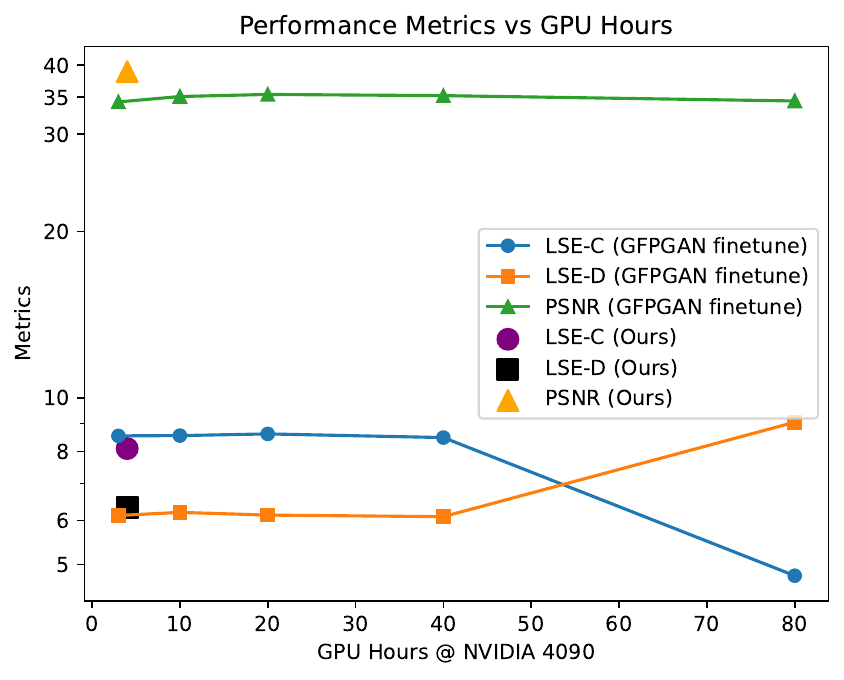}
\vspace{-0.5cm}
  
  \caption{ Comparison with fine-tuned GFPGAN model.Note that the sudden collapse of lip synchronization of GFPGAN is caused by the overfitting. We display videos in our \textit{supplemental materials}.
  }
  \label{fig:finetune}
  % \vspace{-1.0em}
\end{wrapfigure}
For a more exhaustive evaluation, our quantitative assessment is divided into two segments: 
1) \textit{Video quality and lip-synchronization comparison}, which evaluates image quality frame-by-frame and lip-synchronization performance under original audio-driven conditions; 
2) \textit{Out-of-Distribution (OOD) audio-driven comparison}, where we contrast lip-synchronization with SOTA methods to underscore the superior cross-audio generality of our approach, a critical application scenario. 
% The former assesses identity preservation, while the latter evaluates audio generalization capabilities.

\textbf{Comparison with End-to-End Methods.} 
As detailed in Tab.~\ref{tab:origin_voice}, on the Usual Dataset, our method secures the highest scores for both PSNR (38.9213 ) and FID (3.5502), signifying that we achieve state-of-the-art image quality for generated videos. 
This advancement is credited to the SOVQAE, which retains high-frequency texture details in the codebook and consistently recovers facial textures from low-quality faces. 
In terms of lip-synchronization, we outperform all methods (excluding our base model), a testament to Wav2Lip's robust audio-lip alignment capability. 
Although our method moderately reduces the lip-synchronization performance of the base model, the substantial enhancement in visual quality justifies this trade-off as valuable and insightful. 
Similar outcomes are observed in the HF videos.
% , where our method delivers top-tier visual performance (PSNR of 39.5454 and FID score of 4.3065), along with commendable lip-synchronization scores (LSE-C of 8.1392 and LSE-D of 6.8438).

\textbf{Comparison with Post-Process Methods.}
% For a comprehensive analysis, we also juxtapose our method with recent blind face restoration SOTA techniques. 
As illustrated in Tab.~\ref{tab:origin_voice}, our method surpasses two post-process baselines in video quality. 
This advantage stems from the fact that Codeformer and GFPGAN are tailored for blind face restoration, and their face prior learning from out-of-distribution multi-face datasets results in information leakage, leading to identity errors. 
Moreover, their image-by-image training strategy is not optimal for video scenarios, causing frame-wise flickering. 
Although they maintain the lip-synchronization performance of the base model, their subpar visual fidelity limits their applicability to ID-specific THG. 
Furthermore, for fair comparison, we also finetune GFPGAN model in ID-specific videos. As shown in Fig.~\ref{fig:finetune}, while fine-tuning improves its performance in visual quality, its synchronization decreases. On the other hand, our method has least GPU need (5 GPU@4090 hours), best visual quality and comparable synchronization.
These findings strongly affirm the efficacy and efficiency of our cascade pipeline for ID-specific THG.

\textbf{Comparison on OOD Audio Driving.} 
% In the industry, voice generalization is crucial for talking head models. While methods like SyncTalk, GeneFace, and GeneFace++ have made progress, their capabilities still fall short of application requirements. 
% Wav2Lip remains the state-of-the-art (SOTA) in this task. 
To evaluate our method's voice generalization, we compared it with recent SOTA models on the OOD audio-driving task. As shown in Tab.~\ref{tab:origin_voice}, Our method achieves second best in both LSE-C and LSE-D metrics, scoring \textbf{6.5521} in LSE-C and \textbf{7.6231} in LSE-D, demonstrating strong generalization across diverse voices. It significantly outperforms other methods such as SyncTalk, GeneFace++, IP-LAP, and DINet, and is slightly behind Wav2Lip.

\begin{table}[t]
\centering
\caption{The quantitative experiments of different methods on Usual Dataset and HF videos, and the OOD audio-driven experiment. * indicates post-process baselines combined with Wav2Lip.}
\resizebox{1.0\textwidth}{!}{
\begin{tabular}{ccccccccccc}
\toprule
{\textbf{Method}} & \multicolumn{4}{c}{Usual Dataset} & \multicolumn{4}{c}{HF videos} & \multicolumn{2}{c}{OOD-audio}\\
\cmidrule(r){2-5} \cmidrule(r){6-9}  \cmidrule(r){10-11}
~& PSNR$\uparrow$ & FID$\downarrow$ & LSE-C$\uparrow$ & LSE-D$\downarrow$ & PSNR$\uparrow$ & FID$\downarrow$ & LSE-C$\uparrow$ & LSE-D$\downarrow$ & LSE-C$\uparrow$ & LSE-D$\downarrow$\\
\midrule
Wav2Lip\cite{prajwal2020lip} & 34.0979 & 10.8503 & \underline{9.077} & 5.8769 & 34.6593 & 17.5281 & \textbf{8.8496} & {6.8712} &\textbf{7.1258} & \textbf{7.3521}\\
DINet\cite{zhang2023dinet} & 33.9108 & 9.8385 & 7.1951 & 7.4343 & 34.4849 & 10.5749 & 8.3541 & 7.2340  &5.2310 &8.5922\\
IP-LAP\cite{zhong2023identity} & 34.7263 & 10.1703 & 5.1736 & 8.8160 & 35.1595 & 9.9438 & 8.2604 & 7.1838 &4.3759  &9.0596 \\
GeneFace\cite{ye2023geneface} & 24.8165 & 21.7084 & 5.195 & -&  -&  -& - &  - & - & -\\
GeneFace++\cite{ye2023geneface++} & 31.1164 & 20.5506 & 6.8916  &7.5014 &33.9692  &22.3928   &4.6112  &11.0401  & 5.3445& 8.3927\\
% ER-NeRF\cite{li2023efficient} & 32.5216 & 0.9501 & 5.7749 &  &  &  &  &  \\
SyncTalk\cite{peng2023synctalk} & 36.0574 & 6.4855 & 7.054 & 7.282 & 39.4050 & 6.5370 & 7.816 & 7.715  &5.1063 & 8.0474 \\
\midrule
CodeFormer*\cite{zhou2022towards}&33.2441&26.3567&\textbf{9.1739}&\textbf{5.7720}&33.7958&16.7215&7.9409& 7.2582 & -&-\\
GFPGAN*\cite{wang2021towards}&33.9803&16.7267&9.0417&\underline{5.8144}&33.7780&17.2715&\underline{8.7940}&\textbf{6.5193} & -&-\\
\midrule
Ours & \textbf{38.9213} & \textbf{3.5502} & {8.1136} & {6.3402} & \textbf{39.6254} & \textbf{4.3525} & {8.2392} & \underline{6.8238}  & \underline{6.5521}  & \underline{7.6231}\\
\bottomrule
\multicolumn{8}{l}{\textbf{Best}; \underline{Second best}.}
\end{tabular}}
% \vspace{-}
\vspace{-1.0em}
\label{tab:origin_voice}
\end{table}

\begin{wraptable}{r}{0.5\textwidth}
\vspace{-0.5cm}
    \caption{Plug-and-play experiments in Usual Dataset.}
    \centering\resizebox{0.5\textwidth}{!}{
    \begin{tabular}{c|c|c|c|c}
    \toprule
    Method& PSNR$\uparrow$ & FID$\downarrow$& LSE-C$\uparrow$ & LSE-D$\downarrow$ \\
    \hline
    GeneFace++&31.11&20.55&6.89&7.50\\
    GeneFace++ + SOVQAE &33.27&18.21&6.42&8.22\\
    \hline
    SyncTalk (CVPR 2024)    &36.06&6.49&7.05&7.68\\
    SyncTalk + SOVQAE    &37.16&5.12&6.58&7.88\\
    \hline
    Ours &\textbf{38.92} & \textbf{3.55} & \textbf{8.11} & \textbf{6.34}\\
    \bottomrule
    \end{tabular}}
    \label{tab:plug-and-play}
\end{wraptable}
\textbf{Plug-and-Play Performance.}
To demonstrate the plug-and-play performance of SOVQAE, we integrated SOVQAE into different ID-specifc base models. As shown in Tab.\ref{tab:plug-and-play}, SOVQAE improves visual quality metrics across different ID-specific base models, confirming its versatility and effectiveness. Notably, we achieve the best overall performance by leveraging the exceptional audio-lip synchronization capabilities of Wav2Lip. This superior synchronization positively impacts PSNR and FID values, thereby enhancing overall video quality and ensuring ours top performance.

% The results, as presented in ~\ref{tab:OOD}, offer the following insights into the OOD audio-driven situation:

% \begin{enumerate}
%     \item Wav2Lip maintains its optimal results on both LSE-C and LSE-D metrics, reaffirming its status as the current SOTA.
    
%     \item Our method achieves the second-best performance, with \underline{6.4521} in LSE-C and \underline{7.6668} in LSE-D, indicating a strong generalization capability across diverse voice characteristics.
    
%     \item Our method substantially surpasses other methods such as SyncTalk (5.1063 in LSE-C and 8.0474 in LSE-D), GeneFace++ (5.3445 in LSE-C and 8.3927 in LSE-D), IP-LAP (4.3759 in LSE-C and 9.0596 in LSE-D), and DINet (5.2310 in LSE-C and 8.5922 in LSE-D), showcasing its competitive edge in voice generalization.
% \end{enumerate}

% The ability to generalize to OOD audio is essential for creating talking head models that can be deployed in various real-world scenarios. Our method's performance on this front indicates that it can adapt to different voices while maintaining lip synchronization, a critical factor for realistic video generation.
These observations suggest that our method effectively inherits the SOTA audio generalization capability of Wav2Lip and validate the feasibility of our cascade pipeline. 
% \textbf{\textit{Note}}, we also conduct experiment with different choices of base models in \textit{supplemental materials} to demonstrate the plug-and-play performance of LaDTalk framework.
% The results also highlight the importance of further research into voice generalization to meet the evolving demands of industry applications.

\subsection{Ablation Study} 
% In this study, we introduce a novel approach to audio-driven talking head synthesis, diverging from the conventional end-to-end paradigm. Our method focuses on generating realistic talking heads through a temporally-consistent post-processing framework. Both our quantitative and qualitative experiments have illustrated the merits of this innovative concept. 

In this section, we conduct an ablation study to further substantiate the indispensability of each component within our pipeline.

% \textbf{VQ Mechanism.}
% In this paper, our core contributionsare the proof and demonstration of the temporal-consistent latent denoising ability of the VQ Mechanism. To further demonstrate its effectiveness, we compare our method with normal Auto-Encoder training in the same way. As showed in Figure X,

\textbf{VQ Regularization Loss.}
As previously discussed, in practical applications, training without regularization on the VQ codebook can result in a decrease in noise robustness due to the stochastic optimization of the codebook. 
While Fig.~\ref{fig:nois_robustness} demonstrates impressive robustness under Gaussian blur and Gaussian noise, vanilla VQ-AE falls short to handle the unknown noise disturbance between the output of Wav2Lip and ground truth, resulting artifacts.
To illustrate the efficacy of Eq.~\ref{loss_regular} in stabilizing this robustness, we conducted a comparison between the VQ-AE with and without codebook optimization. As shown in Fig.~\ref{fig:ablation_reg}, the absence of our regularization loss introduces numerous artifacts in the output video, attributable to errors in latent matching.

\begin{wrapfigure}{r}{0.5\textwidth}
  \centering  \includegraphics[width=0.5\textwidth]{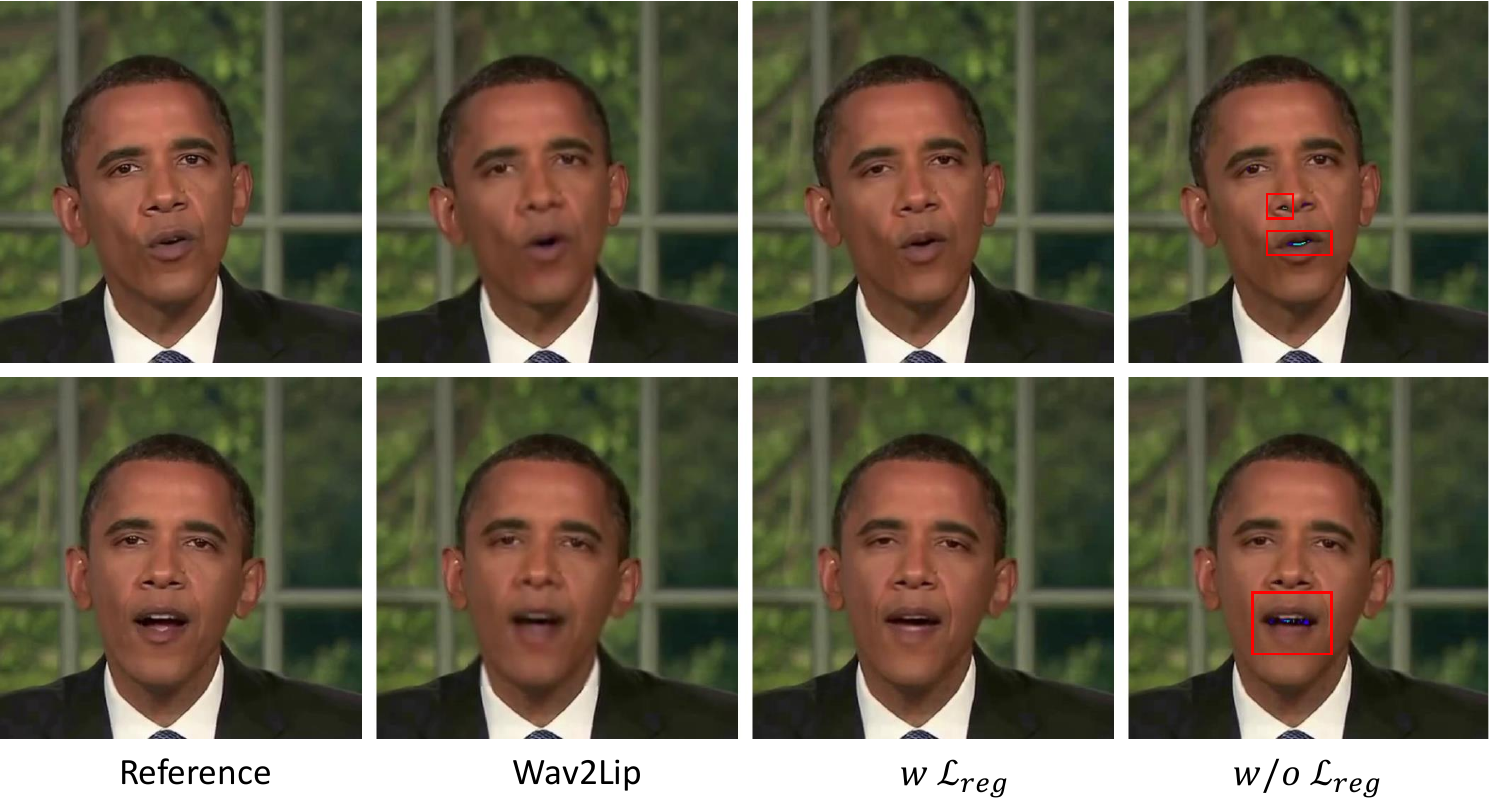}
  \vspace{-0.5cm}
  \caption{Ablation study of the regularization loss. Without this loss will lead to random noises ( see red boxes ).}
  \label{fig:ablation_reg}
  \vspace{-0.5cm}
\end{wrapfigure}
To further demonstrate the necessity and efficacy of our VQ regularization loss, we conduct detailed ablation studies on the lower bound $\theta$ and optimization settings. As shown in Tab.~\ref{tab:ablation_reg}, optimizing the minimal distance pair in the codebook yields the best results: 
(1) Optimizing pairwise distances within the codebook leads to codebook collapse, as indicated by the first two rows. 
(2) In contrast, local optimization with $\theta=1$ is effective. 
(3) Increasing the distance ($\theta=2$) in local optimization degrades performance due to over-constraint. 
This contrasts with general principles for training VQ-VAEs, where maintaining a sufficiently large distance in the codebook is crucial for diversity in image representation. However, our goal differs: while VQ-VAE aims for diverse codebooks to handle varied images, SOVQAE focuses on enhancing noise robustness for identity-specific faces.
% This ablation study confirms the rationality of our regularization loss. 
We do not explore $\theta<1$ since $d_c<1$ is typically the case. 
\begin{wraptable}{r}{0.5\textwidth}
    \caption{Ablation experiments in Usual Dataset. `Reg. Obj.' indicates the regularized object in our regularization loss. `Minimal Dist.' is the $d_c$. `Average Dist.' indicates the average distances of each pair of codebook vectors.}\centering\resizebox{0.5\textwidth}{!}{
    \begin{tabular}{c|c|c|c|c|c}
    \toprule
     PSNR$\uparrow$ & FID$\downarrow$& LSE-C$\uparrow$ & LSE-D$\downarrow$ & Reg. Obj.& $\theta$\\
    \hline
    % Wav2lip+GFPGAN(F.T.)   &32.62&25.99&7.03&7.87& 0+24 &-&-\\ 
    % Wav2lip+GFPGAN(F.T.)   &35.34&12.14&7.54&7.46& 0+48 &-&-\\
    % Wav2lip+CodeFormer(F.T.)   &36.17&15.34&7.69&7.02& 0+48 &-&-\\     
    % \hline
    % GeneFace++&31.11&20.55&6.89&7.50& 24& Minimal Dist.&1\\
    % GeneFace++ + SOVQAE &33.27&18.21&6.42&8.22& 24+8.5& Minimal Dist.&1\\
    % SyncTalk (CVPR 2024)    &36.06&6.49&7.05&7.68& \textbf{2}& Minimal Dist.&1\\
    % SyncTalk + SOVQAE    &37.16&5.12&6.58&7.88& 2+8.5& Minimal Dist.&1\\
    % \hline
     31.21&29.34&4.35&9.58&Average Dist.&2\\
     34.18&18.34&5.67&8.02&Average Dist.&1\\
    34.56&19.27&5.35&8.26&Minimal Dist.&2\\
    \textbf{38.97} & \textbf{3.45} & \textbf{8.13} & \textbf{6.44} &Minimal Dist.&1\\
     \bottomrule
    \end{tabular}}
    % \vspace{-1.0em}

    \label{tab:ablation_reg}
\end{wraptable}

\textbf{Length of training videos.}
An essential inquiry pertains to the least length of video required to train a sufficiently expressive SOVQAE to support our pipeline in achieving realistic and temporally consistent denoising performance. To address this question, we experimentally varied the length of the training video for the SOVQAE and assessed the corresponding impact on performance. The findings, as detailed in Tab.~\ref{tab:ablation_length}, yield the following insights: 
Using extended training videos improves the audio-driven capabilities of SOVQAE. A 5-minute training video is adequate for generating realistic talking head videos with our method, indicating similar data requirements to NeRF-based methods. To make our manuscript more sufficient, we introduce our limitations in Appendix~\ref{discussion}.

\begin{table}[h]
    \centering
    \caption{Ablation study on length of training video. \textbf{Note} that we only compute the PSNR for the \textbf{Mouth} region in this experiment. We conduct this experiment on \textit{Macron} for more comprehensive study, since it is the longest video in Usual dataset. }
    \resizebox{0.75\textwidth}{!}{\begin{tabular}{c|c|c|c|c|c|c|c}
        \toprule
        Length of video & 1 & 2 & 3 & 4 & 5 & 6 & Full (\textit{8'40''}) \\
        \hline
        LSE-C$\uparrow$ & {6.4420} & {6.9288} & {7.0556} & {6.9871} & {7.1377} & \textbf{{7.3170}} & \underline{{7.2912}} \\
        % \hline
        LSE-D$\downarrow$ & {7.2267} & {6.6919} & \underline{6.4644} & {6.5941} & {{6.5378}}& {6.5669} & \textbf{{6.3571}} \\
        % \hline
        Mouth PSNR$\uparrow$ & {29.5289} & {30.4725} & {31.0795} & {31.7604} & \underline{{32.3984}} & 32.2013& \textbf{{32.48622}} \\
        \bottomrule
    \end{tabular}}
    % \vspace{-1.0em}
    \label{tab:ablation_length}
\end{table}
% \begin{enumerate}
%     \item Utilizing longer training videos enhances the audio-driven performance of the Space-Optimised VQ-AE (SOVQAE).
%     \item A training video of 5 minutes is sufficient to generate realistic talking head videos with our method, suggesting that our approach possesses comparable data requirements to those of NeRF-based methods.
% \end{enumerate}

\vspace{-0.5cm}
\section{Conclusion}
This work first establishes a theoretical foundation for the noise robustness of VQ-AE by proving an upper bound based on Lipschitz continuity theory, which enables subsequent method design. 
Building on this foundation, we propose SOVQAE, enhancing temporal consistency in latent space via a space regularization loss.
Furthermore, we introduce an efficient cascade pipeline combining a pre-trained Wav2Lip model with ID-specific SOVQAE, achieving state-of-the-art video quality and out-of-domain lip synchronization accuracy.
The whole pipeline requires only 5 GPU hours (NVIDIA 4090) for training and supports 30 FPS real-time inference, making it suitable for industrial deployment. 
In other words, this work highlights the potential of post-processing paradigms in the domain of talking head generation. We hope that our contributions will provide valuable insights to the research community.

{
    \small
    \bibliographystyle{unsrt}
    \bibliography{main}
}
\newpage
\appendix
% \onecolumn
% \begin{center}
% \textbf{\Large Lipschitz-Driven Noise Robustness in VQ-AE for High-Frequency Texture Repair in ID-Specific Talking Heads}
% \\[10pt]
% {\Large Supplementary Material}
% \end{center}
% \hspace*{\fill}
% \hspace*{\fill}
% \hspace*{\fill}
%%%%%%%%%%%%%%%%%%%%%%%%%%%%%%%%%%%%%%%%%%%%%%%%%%%%%%%%%%%%
\section{Implementation Details.}
In our experiments, we configured the codebook with $c = 256$ channels and a size of $N = 1024$. For the Usual Dataset, we trained the optimized VQ-AE on face crops resized to a resolution of $256 \times 256$ pixels, completing 50 epochs using an NVIDIA 4090 GPU, which amounted to 5 GPU hours. For the HF videos, the face crops were resized to a higher resolution of $512 \times 512$ pixels, and the model was trained for 50 epochs on the same hardware,  consuming 20 GPU hours. We utilized a downsampling factor of $h_o = h / 16$ and $w_o = w / 16$. Additional details regarding the network architecture are presented in our Appendix~\ref{append_more_detail}. The foundational Wav2Lip model employed in our experiments is a pretrained version sourced from the official repository\footnote{\url{https://github.com/Rudrabha/Wav2Lip}}. For the regularization loss defined in Eq.~\ref{loss_regular}, we have empirically set $\theta = 1$. The high frequency videos consists of 4 4K videos from YouTube, reprocessed to $1024\times1024$ at 25 fps with a 10M bitrate (compared to 2M in the Usual dataset). They have \textbf{higher resolution} and \textbf{richer high-frequency textures}, as shown in Fig.\ref{fig:hfvideos}. These videos enable evaluation of identity-specific methods under more detailed and realistic conditions, aligning with industrial needs.
\begin{figure}[h]
    \centering
    \includegraphics[width=1.0\linewidth]{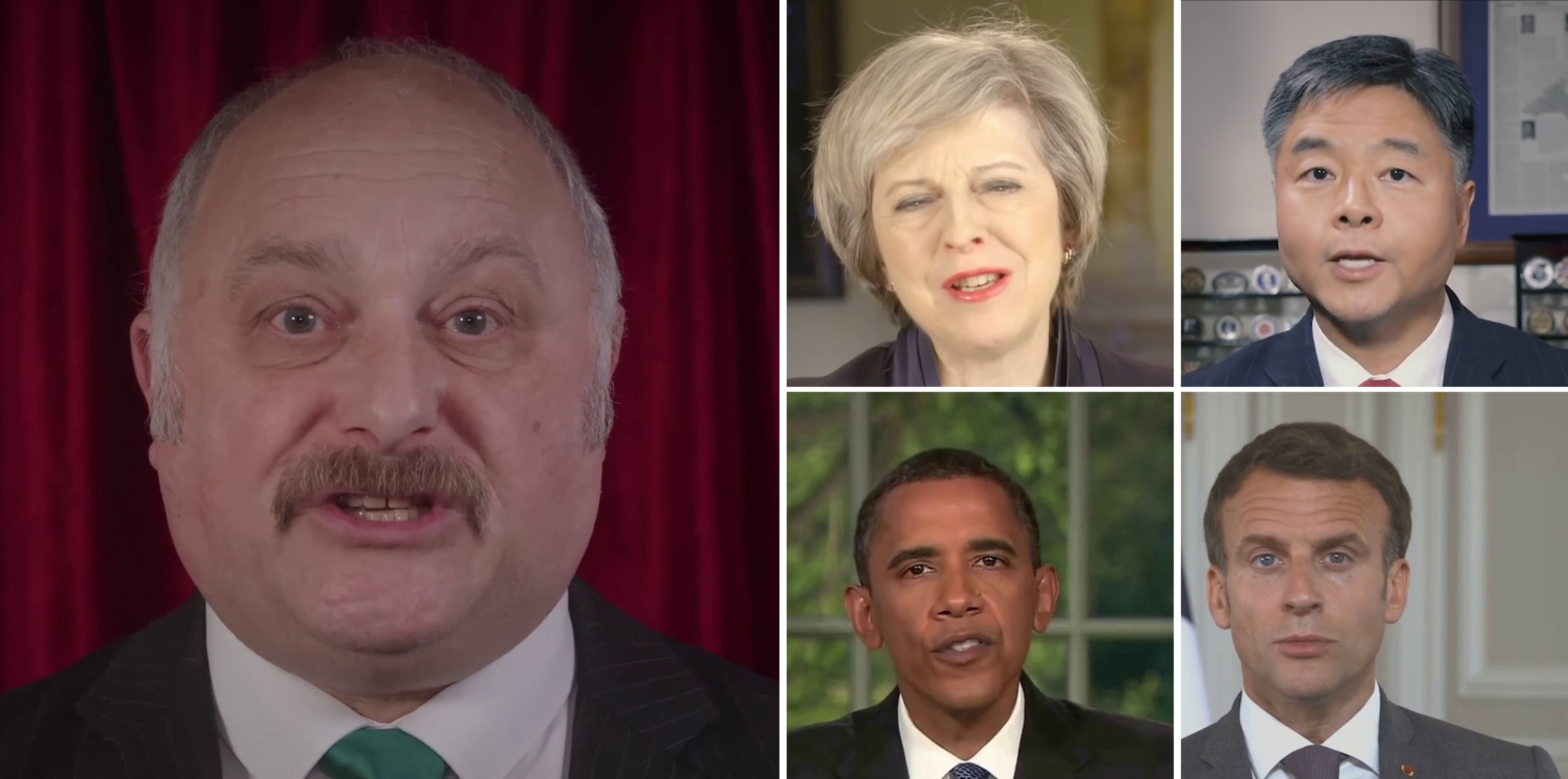}
    \caption{Example display of the processed HF videos. In the left side, we display one keyframe of one of HF videos. In the right side, we display four keyframes of videos in Usual Dataset. It is clear that HF videos have more fine-grained details, sharper textures and larger face region, which are more challenging. }
    \label{fig:hfvideos}
\end{figure}

\begin{figure}[h]
\vspace{-0.2cm}
  \centering
  \includegraphics[width=0.6\linewidth]{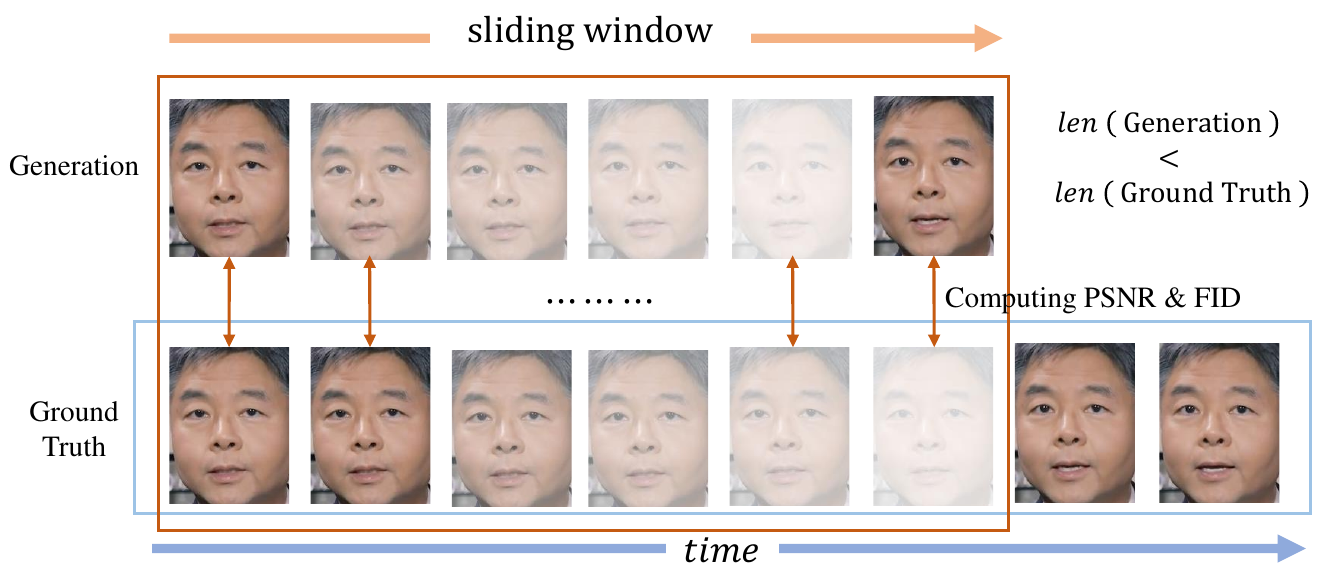}
  % \vspace{-0.5cm}
  \caption{Illustration of our Sliding Evaluation method.}
  \label{fig_sliding}
  \vspace{-0.2cm}
\end{figure}
\textbf{Sliding Evaluation.}
In our experiments, we observe that the generated video lengths from various methods \cite{prajwal2020lip,zhang2023dinet,ye2023geneface,peng2023synctalk,li2023efficient} are not only disparate but also consistently shorter than those of the ground truth videos. Directly assessing image quality in conjunction with video duration could potentially diminish the perceived performance of these methods. To address this issue, we introduce a Sliding Evaluation technique. As illustrated in Fig.~\ref{fig_sliding}, this approach treats the generated video sequence as a dynamic sliding window, calculating the average image quality across all frames captured within the window. The window progressively moves along the timeline, and the maximum value obtained from these iterations is adopted as the final evaluation outcome. We apply this Sliding Evaluation method to all PSNR and FID results to evaluate different methods, thus ensuring a more holistic and equitable assessment.

\section{Discussion}
\label{discussion}
\subsection{Limitations}

In comparison with existing methods, our approach has some limitations in terms of computational requirements for both training and inference phases. While NeRF-based methods such as SyncTalk \cite{peng2023synctalk} and GeneFace++ can achieve faster inference (>40 FPS), our pipeline has only 30 FPS inference speed on an Nvidia 4090 GPU. Besides, some 3DGS-based methods could even achieve ultra inference speed (>100 FPS). Additionally, our training demand is more substantial; for instance, to train our model on 5 minutes of talking head video data, we require approximately 5 GPU hours using an NVIDIA 4090 GPU. These computational demands highlight areas for potential future optimization.
On the other hand, our pipeline cannot achieve novel view synthesis, as it is a video post-processing pipeline. 

Nevertheless, we believe our approach remains valuable for repairing high-frequency details in identity-specific talking-head generation (ID-specific THG).

\subsection{Boarder Impacts}

\subsubsection{Postive impacts}
In contrast to traditional end-to-end frameworks, our method generates realistic talking head videos through a temporally-consistent post-processing approach. From an alternative viewpoint, we also highlight the potential of harnessing the robust lip synchronization capabilities of a pretrained Wav2Lip model. This suggests that integrating a pretrained Wav2Lip model into a NeRF-based talking head pipeline could significantly enhance per-frame stability and synchronization performance.
Moreover, a primary contribution of our work is the theoretical proof and practical demonstration of the noise robustness inherent in the Vector Quantization  mechanism. Consequently, it is a logical next step to explore the application of this concept within the context of adversarial attacks \cite{madry2017towards, xu2020adversarial, zeng2019adversarial} to bolster the security and reliability of neural networks.

\subsubsection{Negative impacts.}
Our work proposes a method for generating high-fidelity, identity-specific talking-head videos. While such technology has promising applications in entertainment, virtual assistants, and telecommunication, it also raises critical ethical concerns. For example, our method could be misused to create highly realistic fake videos for malicious purposes, such as identity theft, political manipulation, or financial fraud. The realism of ID-specific generation may reduce public trust in video content. Hence, if our model is released publicly, we will require users to agree to a usage policy prohibiting misuse (e.g., deepfake generation, identity fraud). Access may be restricted to verified researchers or institutions.

\subsubsection{Future works.}

To the limitation on computational demand, we consider to explore more efficient network design, which guarantees the noise robustness while decrease the size of whole pipeline, aiming to faster inference and lower training burden.
Except for the pipeline optimization, we plan to explore the way of prior integration in talking head task. Specifically, the LQ talking head results of Wav2Lip model are seem as a kind of intermediate representation (like the face keypoints representation in GeneFace and GeneFace++). Following these works' idea, integrating such more semantic-rich intermediate representation into NeRF or 3DGS pipeline seems like a promising way to achieve better audio generality. Besides, we also notice that SOVQAE incurs decrease of lip-synchronization. Hence, alleviating this phenomena via more delicate network design is also our goal.
Besides, it is notable that SOVQAE still has mild damage in lip-synchronization metric to the output of Wav2Lip model, maybe we can explore an extra lightweight network to improve this.

\section{Additional Experiments}
\subsection{Plug-and-Play Experiment}
\label{plug-and-play}
To demonstrate the plug-and-play performance of our SOVQAE, we integrated SOVQAE into various base models. In this experiment, we follow our cascade pipeline workflow: 1) conduct face detection from output video from different base models; 2) resize the detected faces to $256\times 256$ then put them into SOVQAE; 3) resize back to detected shape and paste back to output video from base model. As shown in Tab.\ref{tab:plug-and-play}, SOVQAE improves visual quality metrics across different ID-specific base models, confirming its versatility and effectiveness. Notably, our cascade pipeline achieves the best overall performance by leveraging the exceptional audio-lip synchronization capabilities of Wav2Lip. This superior synchronization positively impacts PSNR and FID values, thereby enhancing overall video quality and ensuring cascade pipeline's top performance.

Additionally, we provide further visual evidence in Fig.~\ref{fig:app_add_exp}. Our SOVQAE significantly enhances high-fidelity details across different ID-specific base models, as evidenced by more detailed forehead wrinkles and finer tooth textures. These improvements highlight the effectiveness of SOVQAE in preserving fine-grained details while maintaining temporal consistency.

\begin{figure}[h]
    \centering
    \includegraphics[width=1.0\textwidth]{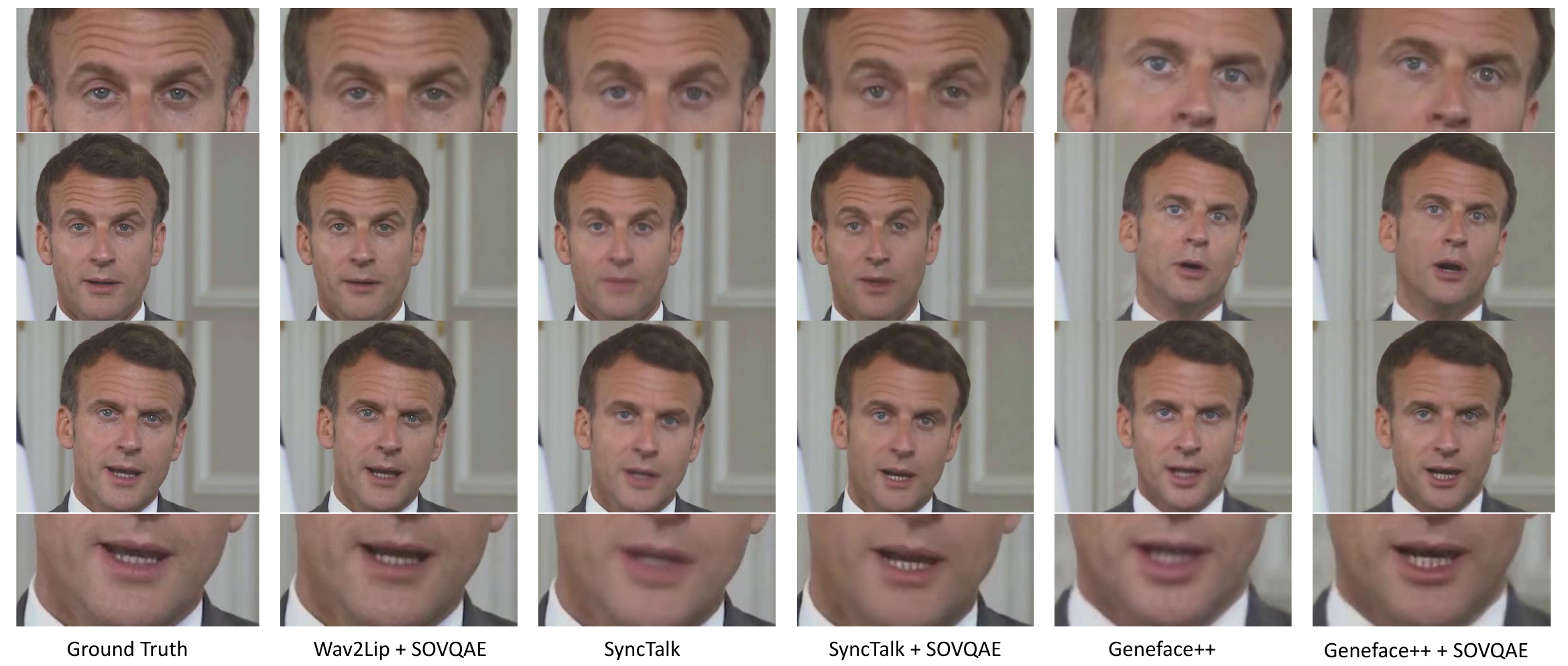}
    \caption{Visualization of the Plug-and-Play Experiment. From top to bottom, the figure shows: 
close-ups of the forehead in the first keyframe from different methods, the first keyframe, the second keyframe, and close-ups of the teeth in the second keyframe.}
    \label{fig:app_add_exp}
\end{figure}

\subsection{OOD Audio-driven Experiment}

As shown in Table~\ref{tab:OOD}, we present a detailed breakdown of each item in our out-of-distribution (OOD) audio-driven experiment. The evaluation covers 10 diverse audio clips spanning multiple languages—including Japanese, English, Chinese, Spanish, and French—as well as varying speaker ages and genders. Across all test cases, our pipeline consistently achieves the best lip synchronization performance.

\begin{table}[t]
\centering
\caption{There are 10 cross-lingual and cross-gender 20 second audios in OOD audio-driven experiment. We use different audios drive same subject and calculate LSE-C and LSE-D metrics.}
\setlength{\tabcolsep}{2pt} %%adjust the [pt] here can make the column interval different, resulting in different size of number :)
\resizebox{1.0\textwidth}{!}{\begin{tabular}{>{\centering\arraybackslash}m{2cm} p{1.5cm} p{1.5cm} p{1.5cm} p{1.5cm} p{1.5cm} p{1.5cm} p{1.5cm} p{1.5cm} p{1.5cm} p{1.5cm}}
\toprule
Method & \multicolumn{2}{c}{DINet\cite{zhang2023dinet}} & 
\multicolumn{2}{c}{IP-LAP\cite{zhong2023identity}} &\multicolumn{2}{c}{GeneFace++\cite{ye2023geneface++}} & \multicolumn{2}{c}{SyncTalk\cite{peng2023synctalk}} & \multicolumn{2}{c}{Ours} \\
\cmidrule(r){2-3} \cmidrule(r){4-5} \cmidrule(r){6-7} \cmidrule(r){8-9} \cmidrule(r){10-11}
Metrics &LSE-C$\uparrow$ & LSE-D$\downarrow$ & LSE-C$\uparrow$ & LSE-D$\downarrow$ & LSE-C$\uparrow$ & LSE-D$\downarrow$ & LSE-C$\uparrow$ & LSE-D$\downarrow$ & LSE-C$\uparrow$ & LSE-D$\downarrow$\\
\midrule
Audio 1 & 4.7768 &8.9890 &4.4024 &9.3161
  &4.8974	 &8.9521
 &4.2177&9.28129 & \textbf{6.2354} & \textbf{8.0987} \\
Audio 2 &4.3692 &9.0287 &4.1531 &8.6423
 &4.6195	 &8.7908
 &4.6328 &8.0043 & \textbf{5.9480} & \textbf{7.6968} \\
Audio 3 &5.6343 &8.6510 & 4.8950& 9.1588
&5.1553 &	8.8068
 &6.3866 &8.0077 & \textbf{7.4764} & \textbf{7.5445} \\
Audio 4 &4.2127 &8.5808 &4.8710 &8.6493
 &3.9984	 &8.2326
 &5.7213 &7.8372 & \textbf{6.7817} & \textbf{7.5449} \\
Audio 5 &\textbf{6.1828} &8.5279 &3.8994 &9.4514
 & 5.9873& 	8.5823
&4.4513 &8.8178 & {6.0246} & \textbf{7.9151} \\
Audio 6&6.2382 &7.9339 &5.1737 &8.5354
 &5.9934	 &7.7590
 &6.0401 &7.1978 & \textbf{7.1988} &\textbf{7.3960} \\
Audio 7 &4.6975 &8.7730 &4.6781 &9.2968
 &5.0774	&8.3832
  &5.4807 &8.2488 & \textbf{7.2357} & \textbf{7.1888} \\
Audio 8 &5.5323 &8.3505 &4.5502 &8.9811
 &5.9367	 &7.8090
 &5.4721 &7.9490 & \textbf{6.8894} & \textbf{7.2750} \\
Audio 9 &5.0672 &8.8984 &4.7241 &9.2790
 &5.5664	 &8.6923
 &5.7529 &7.7905 & \textbf{6.9524} & \textbf{7.5649} \\
Audio 10 &5.5989 &8.1889 & 2.4123& 9.2861
& \textbf{6.2121}& 	\textbf{7.9188}
&2.9075 &7.3401 & 3.7788 & 8.4436 \\
\textbf{average} &5.2310 &8.5922 &4.3759 &9.0596 &5.3445 &8.3927 &5.1063 &8.0474 & \textbf{6.4521} & \textbf{{7.6668}} \\
\bottomrule
\multicolumn{8}{l}{\textbf{Best}.}
\end{tabular}}
\label{tab:OOD}
% \vspace{-1.0em}
\end{table}

\section{Proof of Theorem 3.1}
\label{append_3.1}
\subsection{Notation}

~~For all positive integer $n$, the set $[n]=\{1,2,\cdots, n\}$.

For all set $I$, the cardinality of $A$ is denoted by $|I$.

For all finite index sets $I$ and $J$, an $I\times J$ matrix $M$ over a ring $R$ is a $|I|\times |J|$ matrix whose rows is indexed by $I$ and whose columns is indexed by $J$, and for all $i\in I,j\in J$, the $(i,j)$-entry of $M_{ij}\in R$.

Let $M$ and $N$ be $I\times J$ and $J\times K$ matrices over $R$ respectively. The product of $MN$ is a $I\times J$ matrix over $R$ whose $(i,k)$-entry is
\[
  (MN)_{ik}=\sum_{j\in J}M_{ij}N_{jk}.
\]

For all $I\times  J$ matrix $M$ over $\RR$, the norm $\|M\|$ of $M$ is the Frobenius norm of matrix, that is,
\[
  \|M\|_F=\sqrt{\sum_{(i,j)\in I\times J}M^2_{ij}}.
\]

\subsection{One layer CNN}

We main consider the CNN for graphs, hence we always assume that the shape of input is $c_i\times h\times w$.

\begin{definition}
  A \emph{convolutional layer} $\lL$ is a data $(\ker\lL, p\in\NN^2, s\in\NN^2)$ which is consisted of
  \begin{enumerate}
    \item A tensor $\ker\lL$ of shape $c_i\times c_o\times k_h\times k_w$, whish is called the \emph{convolution kernel} of $\lL$. Where $c_i$ is the number of input channels and $c_o$ is the number of output channels.
    \item A pair $p=(p_h,p_w)$ is the \emph{padding} of $\lL$.
    \item A pair $s=(s_h,s_w)$ is the \emph{stride} of $\lL$.
  \end{enumerate}
  For convenience, we assume that $s_h$ and $s_w$ is a factor of $(h-k_h+p_h)$ and $(w-k_w+p_w)$ respectively. 
\end{definition}

For all convolutional layer $\lL=(\ker\lL, p, s)$, there is a map
\[
  F_\lL:\RR^{c_i\times h\times w} \rightarrow \RR^{c_o\times o_h\times o_w}
\]
corresponding to $\lL$, where $o_h=1+\frac{h-k_h+p_h}{s_h}$ and $o_w=1+\frac{w-k_w+p_w}{s_w}$. 

The domain $\RR^{c_i\times h\times w}$ should be considered as $c_i$ input channels of shape $h\times w$. Similarly, the codomain $\RR^{c_o\times h\times w}$ should be considered as $c_o$ output channels of shape $o_h\times o_w$.

Let $V$ and $W$ be $R$-modules. We denote $\Hom_{R}(V,W)$ as the space of $R$-linear maps from $V$ to $W$, which is also an $R$-module. And if 
\[
  V=\bigoplus_{i\in I}V_i,\quad\text{and}\quad W=\bigoplus_{j\in J}W_j
\]
then there is an $R$-isomorphism
\[
  \Hom_R\left(\bigoplus_{i\in I}V_i,\bigoplus_{j\in J}W_j\right)\cong \bigoplus_{(i,j)\in I\times J}\Hom_R(V_i,W_j).
\]
We refer to \cite[Part 3]{dummit2004abstract} for details.

The map $F_\lL$  is linear, then since
\[
  \RR^{c_i\times h\times w}=\bigoplus_{i=1}^{c_i}\RR^{h\times w}\quad\text{and}\quad\RR^{c_o\times o_h\times o_w}=\bigoplus_{j=1}^{c_o}\RR^{o_h\times o_w},
\]
there is a bijection
\[
  \Hom_\RR\left(\RR^{c_i\times h\times w},\RR^{c_o\times o_h\times o_w}\right)=\\
  \Hom_\RR\left(\bigoplus_{i=1}^{c_i}\RR^{h\times w},\bigoplus_{j=1}^{c_o}\RR^{o_h\times o_w}\right)\cong\bigoplus_{i,j}\Hom_\RR\left(\RR^{h\times w}, \RR^{o_h\times o_w}\right)
\]
the linear map $F_\lL$ can be identified as $c_o\times c_i$ many matrix over $\RR$ such that the $(i,j)$-matrix are linear maps 
\[
  F_{\lL,i,j}:\RR^{h\times w} \rightarrow \RR^{o_h\times o_w}.
\]

The linear map $F_{\lL,i,j}$ is the composition of
\[
  \RR^{h\times w} \xrightarrow{\iota} \RR^{(h+p_h)\times(w+h_w)} \xrightarrow{\hat{F}_{\lL,i,j}}\RR^{o_h\times o_w},
\]
where $\iota$ is a linear map represented by a $([h+p_h]\times[w+p_w])\times([h]\times [w])$ matrix whose $((a,b),(c,d))$-entry is
\[
  \iota_{(a,b),(c,d)}=\begin{cases}
    1, & \text{if } c=a+p_h,d=b+p_w \\
    0, & \text{otherwise}
  \end{cases}
\]
We can check that the map $\iota$ is an isometry embedding.

The linear map $\hat{F}_{\lL,i,j}$ can also be represented by $([o_h]\times [o_w])\times([h+p_h]\times [w+p_w])$ matrix such that the $((a, b),(c,d))$-entry is
\[
  F_{\lL,i,j,(a,b),(c,d)}=\begin{cases}
    (\ker\lL)_{i,j,x,y}, & \text{if } (\star) \\
    0, & \text{otherwise}
  \end{cases}
\]
where $(\ker\lL)_{i,j,x,y}$ is the $(i,j,x,y)$-entry in the tensor $\ker\lL$, and the condition
\[
  (\star):a=(c-1)k_h+x, b=(d-1)k_w+y, 1\le x\le k_h, 1\le y\le k_w.
\]

Now we use $\|M\|_\op$ to represent the operation norm of the matrix $M$. That is,
\[
  \|M\|_\op=\max_{\|x\|=1}\|Mx\|_2.
\]

By definition,
\[
  F_\lL=\hat{F}_\lL\circ\iota,
\] 
so we have 
\[
  \|F_\lL\|_\op\le \|\hat{F}_\lL\|_\op\|\iota\|_\op= \|\hat{F}_\lL\|_\op.
\]

Thus, to estimate the operation norm $\|F_\lL\|$ of $F_\lL$, it suffices to find the operation norm of $\|\hat{F}_\lL\|$. Now we first prove a simple but useful lemma.

\begin{lemma}
  Let $\aA$ be an $M\times N$ matrix that is be identity with an $m\times n$ block matrix $(A_{ij})_{ij}$, then
  \[
    \|\aA\|_\op \le \sqrt{mn}\max_{i,j}\|A_{ij}\|_\op.
  \]
\end{lemma}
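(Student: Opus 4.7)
The plan is to bound $\|\aA x\|_2$ uniformly over unit vectors $x \in \RR^N$, exploiting the block decomposition. First I would partition the input according to the column blocks as $x = (x_1, \ldots, x_n)^\top$, where $x_j$ has the same number of entries as the columns of $A_{\cdot j}$, so that $\|x\|_2^2 = \sum_{j=1}^n \|x_j\|_2^2$. Correspondingly, the output $\aA x$ decomposes into row blocks $(y_1, \ldots, y_m)^\top$ with
\[
  y_i = \sum_{j=1}^n A_{ij} x_j, \qquad i \in [m].
\]
This reduces the global estimate on $\|\aA\|_\op$ to controlling each $\|y_i\|_2$ in terms of $\max_{i,j}\|A_{ij}\|_\op$ and the sizes $\|x_j\|_2$.

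Next, for each fixed row index $i$, I would apply the triangle inequality and the definition of the operator norm, then Cauchy--Schwarz on the $n$-tuple $(\|x_1\|_2, \ldots, \|x_n\|_2)$, to obtain
\[
  \|y_i\|_2 \;\le\; \sum_{j=1}^n \|A_{ij}\|_\op\, \|x_j\|_2 \;\le\; \Bigl(\max_{i',j'}\|A_{i'j'}\|_\op\Bigr)\sum_{j=1}^n \|x_j\|_2 \;\le\; \sqrt{n}\,\Bigl(\max_{i',j'}\|A_{i'j'}\|_\op\Bigr)\|x\|_2.
\]
Squaring this estimate and summing over $i \in [m]$ then yields
\[
  \|\aA x\|_2^2 \;=\; \sum_{i=1}^m \|y_i\|_2^2 \;\le\; m\,n\,\Bigl(\max_{i,j}\|A_{ij}\|_\op\Bigr)^2 \|x\|_2^2,
\]
and taking square roots followed by the supremum over unit vectors $x$ delivers the asserted inequality $\|\aA\|_\op \le \sqrt{mn}\,\max_{i,j}\|A_{ij}\|_\op$.

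I do not expect any serious obstacle, since the argument is essentially a routine two-step application of Cauchy--Schwarz. The only points requiring care are to use the orthogonal block decomposition of $x$ correctly so that $\sum_j \|x_j\|_2^2 = \|x\|_2^2$ (this is where the Frobenius/Euclidean structure is invoked), and to apply the triangle inequality before pulling out the common maximum $\max_{i,j}\|A_{ij}\|_\op$, ensuring the two factors $\sqrt{n}$ (from columns) and $\sqrt{m}$ (from rows) accumulate into $\sqrt{mn}$ as claimed.
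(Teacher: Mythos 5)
Your argument is correct and is essentially the same as the paper's: both obtain the $\sqrt{n}$ factor by applying the triangle inequality over column blocks followed by Cauchy--Schwarz, and the $\sqrt{m}$ factor by summing the squares of the row-block norms. The only cosmetic difference is that you carry out the estimate in one pass, whereas the paper first treats the special cases $m=1$ and $n=1$ separately and then nests them for the general case.
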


\begin{proof}
  Consider the easier case when $m=1$, for all $x\in\RR^N$, we split it as $(x_j)_{1\le j\le n}$ which is compatible with the block matrix representation of $\aA$. We represent the vector $x_i$ by $(x_{jk})_k$. Then
  \[
    \|x\|_2=\sqrt{\sum_{j=1}^n \sum_kx_{jk}^2}=\sqrt{\sum_{j=1}\|x_j\|_2^2}.
  \]

  Now $\aA x=\sum_{j=1}^n A_{1j}x_j$, so
  \[
    \begin{aligned}
      \|\aA x\|_2=\left\|\sum_{j=1}^n A_{1j}x_j\right\|_2 & \le\sum_{j=1}^n\|A_{1j}x_j\|_2 \\
      & \le\sum_{j=1}^n\|A_{1j}\|_\op\|x_j\|_2 \\
      & \le \sqrt{\sum_{j=1}^n \|A_{1j}\|_\op^2}\sqrt{\sum_{j=1}^n\|x_j\|_2} \\
      & \le \sqrt{n}\max_{1j}\|A_{1j}\|_\op\|x\|_2,
    \end{aligned}
  \]
  so $\|\aA\|_\op\le \sqrt{n}\max_{1j}\|A_{1j}\|_\op$.

  Then when $n=1$, for all $x\in\RR^N$, $\aA x=(A_{i1}x)_{1\le i\le m}$, so by above
  \[
    \|\aA x\|_2=\|(A_{i1}x)^T\|_2=\sqrt{\sum_{i=1}^m\|A_{i1}x\|_2^2}\le\sqrt{\sum_{i=1}^m\|A_{i1}\|_\op^2\|x\|_2^2}\le\sqrt{m}\max_{i}\|A_{i1}\|_\op\|x\|_2,
  \]
  so $\|\aA\|_\op\le \sqrt{n}\max_{1j}\|A_{i1}\|_\op$.

  Now for the general case, denote the $1\times n$ block matrix $\aA_{i}=(A_{ij})_{1\le j\le n}$, then $\aA$ can be identity with the $m\times 1$ block matrix $(\aA_i)_{1\le i\le m}$, so by above
  \[
    \|\aA\|_\op\le\sqrt{m}\max_{i}\|\aA_i\|_\op\le\sqrt{m}\max_{i}\left(\sqrt{n}\max_{j}\|A_{ij}\|_\op\right)=\sqrt{mn}\max_{i,j}\|A_{ij}\|_\op,
  \]
  hence, $\|\aA\|_\op \le \sqrt{mn}\max_{i,j}\|A_{ij}\|_\op$.
\end{proof}

By the above Lemma, we have
\[
  \|\hat{F}_\lL\|_\op\le \sqrt{c_ic_o}\max_{i,j}\|\hat{F}_{\lL,i,j}\|_\op.
\]

Now we will estimate the operation norm of $\hat{F}_{\lL,i,j}$. 

By the definition of $\hat{F}_{\lL,i,j}$, we have a key observation: we can rearrange the index such that the rows of $\hat{F}_{\lL,i,j}$ are very similar, the matrix after rearranging is denoted by $\Tilde{F}_{\lL,i,j}$. Moreover, we have
\[
  \|\hat{F}_{\lL,i,j}\|_\op\le\|\Tilde{F}_{\lL,i,j}\|_\op.
\]

For every vector $v=(v_i)_i\in\RR^n$ and for all positive integer $m$, we define the shifted vector
\[ 
  v[m]=(\underbrace{0,0,\cdots,0}_{m\text{ times}}, v_1, v_2, \cdots, v_{n-m}).
\]
After a proper rearrangement of index, let $\wp_k$ be the $k$-th row of $\Tilde{F}_{\lL,i,j}$, by calculation,
\[
  \wp_k = \wp_1[s_h(k-1)],
\]
hence, we denote $\wp_1$ by $\wp$. By the fact
\[
  \|A\|_\op = \sqrt{\rho(AA^T)},
\]
where $\rho(A)$ is the spectral radius of $A$. See details for \cite[Theorem 6.15]{shores2007applied}.

Notice that $\Tilde{F}_{\lL,i,j}\Tilde{F}_{\lL,i,j}^T$ is a positive definite symmetric Toeplitz matrix
  \[
    \begin{pmatrix}
      c_0 & c_{1} & c_2 &  \cdots & c_n \\
      c_1 & c_0 & c_1 & \cdots & c_{n-1} \\
      c_2 & c_1 & c_0 & \cdots & c_{n-2} \\
      \vdots & \vdots & \vdots & \cdots & \vdots \\
      c_n & c_{n-1} & c_{n-2} & \cdots & c_0 
    \end{pmatrix}
  \]
  where $c_k=\wp_1\cdot \wp_{k+1}, n=o_ho_w-1$.

\begin{proposition}
  If $s_h\ge k_h$ and $s_w\ge k_w$, then $\|\hat{F}_{\lL,i,j}\|_\op\le\|\wp\|_2=\|\ker\lL_{ij}\|_F$.
\end{proposition}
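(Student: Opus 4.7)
The plan is to exploit the hypothesis $s_h\ge k_h$ and $s_w\ge k_w$, which forces the receptive fields of distinct output positions to be disjoint, so that the rows of $\hat{F}_{\lL,i,j}$ become mutually orthogonal and the operator norm reduces to a single row norm. First, I would unpack the structure of $\hat{F}_{\lL,i,j}$ using the definition given above: each row, indexed by an output position $(a,b)\in[o_h]\times[o_w]$, carries the $k_h k_w$ kernel weights $(\ker\lL)_{i,j,x,y}$ placed at the padded input positions $((a-1)s_h+x,\,(b-1)s_w+y)$ with $1\le x\le k_h$ and $1\le y\le k_w$, and zero elsewhere. Thus the support of the $(a,b)$-row is the $k_h\times k_w$ window $W_{a,b}$ based at $((a-1)s_h+1,(b-1)s_w+1)$.

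Next I would verify pairwise disjointness of the $W_{a,b}$. Given distinct $(a_1,b_1)\neq(a_2,b_2)$, either $a_1\ne a_2$ or $b_1\ne b_2$. In the first case the two $x$-intervals $[(a_i-1)s_h+1,\,(a_i-1)s_h+k_h]$ are separated by at least $s_h-k_h\ge 0$, so they are disjoint; symmetrically for the $y$-intervals when $b_1\ne b_2$. In either case $W_{a_1,b_1}\cap W_{a_2,b_2}=\emptyset$, so any two distinct rows of $\hat{F}_{\lL,i,j}$ have disjoint supports and are orthogonal. Consequently the Gram matrix $\hat{F}_{\lL,i,j}\hat{F}_{\lL,i,j}^T$ is diagonal, and every diagonal entry equals the sum of squares of the kernel weights on one window, namely
\begin{equation}
\sum_{x=1}^{k_h}\sum_{y=1}^{k_w}(\ker\lL)_{i,j,x,y}^2=\|\ker\lL_{ij}\|_F^2.
\end{equation}

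From here the conclusion is immediate: the spectral radius of the diagonal matrix $\hat{F}_{\lL,i,j}\hat{F}_{\lL,i,j}^T$ is $\|\ker\lL_{ij}\|_F^2$, and by the identity $\|A\|_\op=\sqrt{\rho(AA^T)}$ quoted above we obtain $\|\hat{F}_{\lL,i,j}\|_\op=\|\ker\lL_{ij}\|_F$. Since the rearrangement producing $\Tilde{F}_{\lL,i,j}$ is a permutation of rows and columns it does not increase (in fact preserves) the operator norm, so $\|\hat{F}_{\lL,i,j}\|_\op\le\|\Tilde{F}_{\lL,i,j}\|_\op=\|\wp\|_2$, and the identity $\|\wp\|_2=\|\ker\lL_{ij}\|_F$ follows from the fact that $\wp$ is just the kernel slice embedded into the padded input grid with zeros elsewhere.

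The main subtlety I expect is resisting the temptation to reason purely from the one-dimensional Toeplitz picture $\wp_k=\wp[s_h(k-1)]$, because that description only captures neighboring rows within a single output row and is not literally accurate across the "wrap-around" between successive output rows; I would therefore stay with the two-dimensional $(a,b)$-indexing when proving the disjointness of supports. Once that structural observation is in place, the remainder is routine linear algebra and does not require the general Toeplitz spectral estimates that would be needed when $s_h<k_h$ or $s_w<k_w$.
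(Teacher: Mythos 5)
Your proof is correct and follows the same underlying idea as the paper's: when $s_h\ge k_h$ and $s_w\ge k_w$, distinct rows of the convolution matrix have disjoint supports, hence are orthogonal, so the Gram matrix is diagonal with entries $\|\ker\lL_{ij}\|_F^2$ and the operator norm follows from $\|A\|_\op=\sqrt{\rho(AA^T)}$. The one substantive difference is that where the paper dismisses the orthogonality check with ``by calculation'' after invoking the one-dimensional shift formula $\wp_k=\wp_1[s_h(k-1)]$, you stay with the two-dimensional $(a,b)$-indexing and verify window disjointness explicitly; this is the more careful route, since (as you observe) the 1D shifted-vector description does not faithfully capture what happens across the boundary between successive output rows under a row-major flattening, whereas the disjointness of the $k_h\times k_w$ receptive-field windows is manifestly a 2D statement and suffices for the conclusion. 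Your argument also establishes the equality $\|\hat{F}_{\lL,i,j}\|_\op=\|\ker\lL_{ij}\|_F$ rather than only the stated inequality, which is a mild strengthening.
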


\begin{proof}
  By calculation, if $s_h\ge k_h$ and $s_w\ge k_w$, then $\wp_u\cdot \wp_v=0$ for all $u\ne v$, hence $AA^T=\diag(\|\wp_1\|^2,\|\wp_2\|^2,\cdots)$, and by definition of $F_\lL$, 
  \[
    \|\wp_k\|_2=\|\wp\|_2=\|\ker\lL_{ij}\|_F,
  \]
  hence, 
  \[
    \|\hat{F}_{\lL,i,j}\|_\op\le\|\Tilde{F}_{\lL,i,j}\|_\op=\sqrt{\rho\left(\Tilde{F}_{\lL,i,j}\Tilde{F}_{\lL,i,j}^T\right)}=\sqrt{\|\wp\|_2^2}=\|\wp\|_2=\|\ker\lL_{ij}\|_F,
  \]
  we are done.
  \end{proof}

  In practical applications, often $k_h\ll h,k_w\ll w$, which means that $AA^T$ is a positive definite symmetric banded Toeplitz matrix, there are many theorems estimating the operation norm of such matrices. Now we state some relative results.

  Let $f$ be a real value function. The \emph{essential supremum} $M_f$ of $f$ is the smallest number such that $f(x)\le M_f$ for all $x$ except on a set of measure 0. 

  For all Toeplitz matrix
  \[
    C=\begin{pmatrix}
      c_0 & c_{-1} & c_{-2} &  \cdots & c_{-n} \\
      c_1 & c_0 & c_{-1} & \cdots & c_{-(n-1)} \\
      c_2 & c_1 & c_0 & \cdots & c_{-(n-2)} \\
      \vdots & \vdots & \vdots & \cdots & \vdots \\
      c_n & c_{n-1} & c_{n-2} & \cdots & c_0 
    \end{pmatrix}
  \]
  the Fourier series $f_C(\lambda)$ associate to $C$ is defined by
  \[
    f_C(\lambda)=\sum_{k=-n}^nc_k\mathrm{e}^{\mathrm{i}k\lambda},
  \]
  then we have
  \[
    \rho(C)\le 2M_{|f_C|}.
  \]
   In particular, if $C$ is Hermitian, that is, $c_{-k}=c_k^*$, then
  \[
   \rho(C)\le M_{|f_C|}.
  \]

  We refer to \cite[Lemma 4.1]{gray2006toeplitz} for details.

  We denote the Fourier series $f_{\Tilde{F}_{\lL,i,j}\Tilde{F}_{\lL,i,j}^T}$ associate to $\Tilde{F}_{\lL,i,j}\Tilde{F}_{\lL,i,j}^T$ by $f_{\lL,i,j}$. The above result gives that
  \[
    \|\hat{F}_{\lL,i,j}\|_\op \le\|\Tilde{F}_{\lL,i,j}\|_\op= \sqrt{\rho\left(\Tilde{F}_{\lL,i,j}\Tilde{F}_{\lL,i,j}^T\right)} \le \sqrt{M_{\left|f_{\lL,i,j}\right|}}.
  \]

  Therefore, we concliude that 
  \[
    \|F_{\lL}\|_\op\le \max_{i,j}\sqrt{c_ic_oM_{\left|f_{\lL,i,j}\right|}}.
  \]
  
  As a conclusion, if $\lL$ is a convolutional layer, and $F_\lL$ is the map correspoeing to $\lL$, then for all $x,y\in\RR^{c_i\times h\times w}$, we have
  \[
    \|F_\lL(x)-F_\lL(y)\|_F\le M_\lL\|x-y\|_F.
  \]
  where $M_\lL=\max_{i,j}\sqrt{c_ic_oM_{\left|f_{\lL,i,j}\right|}}$.

  We always assume that the activation function is Lipschitz continuous and its Lipschitz constant is known.
  
  \subsection{Multiple layers CNN}
  
  The above results show that single-layer CNN is Lipschitzian continuous with computable Lipschitz constant. Since any CNN network is a composite of multiple single-layer CNN networks and activation functions, we conclude that any CNN is Lipschitzian continuous with computable Lipschitz constant.

  We, therefore, reach the following conclusion.

  Let $\epsilon:\RR^{c_i\times h\times w} \rightarrow \RR^{c_o\times o_h\times o_w}$ is a multi-layer CNN, $\lL_k$ the $k$-th convolutional layer, and let the Lipschitz constant of the $k$-th activation functions be $L_{k}$. Then
  \[
    \forall x,y\in\RR^{c_i\times h\times w}, \quad \|\epsilon(x)-\epsilon(y)\|_F\le L_\epsilon\|x-y\|_F,
  \]
  where $L_\epsilon=\prod_{k}M_{\lL_k}L_k$.$\qedsymbol$

\section{Proof of Theorem 3.2}
\label{append_3.2}
Now we mainly consider a CNN as a Lipschitzian continuous function $\epsilon$ with Lipschitzian constant $L_{\epsilon}$. 
The spaces $\sS$ of graphs of size $h\times w$ is a subset of the space $\RR^{c_i\times h\times w}$. The range $\epsilon(\sS)$ can be seen as an encoding of $\sS$. 
In this paper, we utilize VQ-AE to learn suitable encoding of space $\sS$ and simultaneously select anchors in this encoding. Our goal is for this network to learn the features of high-quality images and minimize the distance between codebook and the features. Meanwhile, we also wish the network to be robust with mild input disturbance .Next, we will demonstrate the feasibility of our approach. 
  
We first prove it in the case of \textbf{single channel}:

Let $\{ \epsilon, \delta, {\bf C} \subset \mathbb{R}^c, g_{{\bf C}}\}$ be VQ-AE with single channel latent space, where $\epsilon$ is the CNN encoder with Lipschitzian constant $L_{\epsilon}$, and $\mathbf{C}\subset \epsilon(\sS)$ is a set of codebook anchors. Define $d_{\mathbf{C}}=\min\{\|a-b\|_F:a,b\in \mathbf{C},a\ne b\}$ and $\gamma$ be the maximal distance of high-quality image latent to closest anchor, that is, $\gamma=\max_{p\in \epsilon(\mathbf{HQ})}d(p,\mathbf{C})$, where $\mathbf{HQ}$ is the space of high quality images, and $p \in \mathbb{R}^c$ is single channel vector. Assume that $2\gamma< d_{\mathbf{C}}$ (for almost all cases, it is naturally satisfied). For all low-quality image $y$ corresponding to a high-quality image $x$ such that
  \[
    \|x-y\|_F<\frac{d_{\mathbf{C}}-2\gamma}{2L_\epsilon},
  \]
  then we have 
  \[
    \|\epsilon(x)-\epsilon(y)\|_F\le L_\epsilon\|x-y\|_F<\frac{d_{\mathbf{C}}-2\gamma}{2}.
  \]
  
  Let $g_{\mathbf{C}}(\epsilon(x))=s$, that is, $s \in \textbf{C}$ is the anchor closest to $\epsilon(x)$. Then
  \[
    \|\epsilon(x)-s\|=d(\epsilon(x),\mathbf{C})\le\max_{p\in \epsilon(\mathbf{HQ})}d(p,\mathbf{C})=\gamma.
  \]
  
  For anchor $s$, 
  \[
    \begin{aligned}
        \|\epsilon(y)-s\|_F & =\|\epsilon(y)-\epsilon(x)+\epsilon(x)-s\|_F \\
        & \le \|\epsilon(y)-\epsilon(x)\|_F +\|\epsilon(x)-s\|_F \\
        & < \frac{d_{\mathbf{C}}-2\gamma}{2}+\gamma \\
        & = \frac{d_{\mathbf{C}}}{2}
    \end{aligned}
  \]
  so $\|\epsilon(y)-s\|_F<\frac{d_{\mathbf{C}}}{2}$.

  For each anchor $s\ne a\in \mathbf{C}$, we claim that $\|\epsilon(y)-a\|\ge \frac{d_{\mathbf{C}}}{2}$: if not, then
  \[
    \begin{aligned}
        d_{\mathbf{C}} & =\min\{\|a-b\|_F:a,b\in \mathbf{C},a\ne b\} \\
        & \le \|a-s\|_F \\
        & =\|a-\epsilon(x)+\epsilon(x)-s\|_F \\
        & \le \|\epsilon(x)-a\|_F+\|\epsilon(x)-s\|_F \\
        & < \frac{d_{\mathbf{C}}}{2} + \frac{d_{\mathbf{C}}}{2} \\
        & = d_{\mathbf{C}}
    \end{aligned}
  \]
  so $d_{\mathbf{C}}<d_{\mathbf{C}}$, a contradiction. Thus, $s$ is also the anchor closest to $\epsilon(y)$, which means that
  \[
    g_{\mathbf{C}}(\epsilon(y))=s=g_{\mathbf{C}}(\epsilon(x)).
  \]

  When latent space is \textbf{multi-channel} ($\epsilon(x)\subset \mathbb{R}^{h\times w\times
  c}$), for all low-quality image $y$ corresponding to a high-quality image $x$ such that
  \[
    \|x-y\|_F<\frac{d_{\mathbf{C}}-2\gamma}{2L_\epsilon}.
  \]
  
  Let $\epsilon(x)_{ij}, \epsilon(y)_{ij} \in \mathbb{R}^c$ be two single-channel vector of $\epsilon(x), \epsilon(y)$, and $s_{ij} = g_{\mathbf{C}}(\epsilon(x)_{ij}) \in \textbf{C}$ be the closest anchor of $\epsilon(x)_{ij}$. Then
  \[
    \|\epsilon(x)_{ij}-\epsilon(y)_{ij}\|_F \le \|\epsilon(x)-\epsilon(y)\|_F<\frac{d_{\mathbf{C}}-2\gamma}{2},
  \]
  by the proof of the single channel case, 
  \[
    g_{\mathbf{C}}(\epsilon(y)_{ij})=g_{\mathbf{C}}(\epsilon(x)_{ij}),
  \]
  which shows that
  \[
    g_{\mathbf{C}}(\epsilon(y))=(g_{\mathbf{C}}(\epsilon(y)_{ij}))_{ij}=(g_{\mathbf{C}}(\epsilon(x)_{ij})_{ij}=g_{\mathbf{C}}(\epsilon(x)).
  \]
  Therefore, we could get the desired high-quality image $x$ corresponding to $y$ after decoding by $\delta$. $\qedsymbol$

  \begin{remark}
      If the VQ-AE $\{ \epsilon, \delta, {\bf C} \subset \mathbb{R}^c, g_{{\bf C}}\}$ is convergent, the distance of high-quality images to codebook $\mathbf{C}$ is extremely small, which means that $\gamma\ll 1$ is negligible.
  \end{remark}
  
  As a corollary, if $\mathbf{I}_{high}$ is a high-quality image, and $\mathbf{I}_{up}=\mathbf{I}_{high}+\nN$ where $\nN$ is the Gaussian degradation with $\|\nN\|_F<\frac{d_{\mathbf{C}}-2\gamma}{2L_\epsilon}$, then the above shows that
  \[
    g_{\mathbf{C}}\left(\epsilon(\mathbf{I}_{up})\right)=g_{\mathbf{C}}\left(\epsilon(\mathbf{I}_{high})\right).
  \]
  
  Therefore, for CNN $\epsilon$, it can correctly match images with a distance of no more than $\frac{d_{\mathbf{C}}-2\gamma}{2L_{\epsilon}}$ from the high-quality images. The correctness of this result requires us to ensure that the selected anchors are all high-quality images, which requires us to use high-quality images for training so that the model can extract features from high-definition images. The judgment range of this model is determined by three parts: first, the Lipschitz constant $L_{\epsilon}$ of $\epsilon$, and second the distance $c$ between anchors in $\epsilon(\sS)$, and third, the distribution of anchors in the original space $\sS$. These three factors are interdependent. To be precise, while ensuring that the anchor points taken are all high-quality images, the effective range of the model is
  \[
    \bigcup_{a\in \mathbf{C}}B\left(\epsilon^{-1}(a),\frac{d_{\mathbf{C}}-2\gamma}{2L_{\epsilon}}\right),
  \]
  where $B(a,r)=\{x:\|x-a|<r\}$. 

  This requires us to ensure that the anchors are all high-quality images, and to make the anchors as uniform as possible in the original space $\sS$ if the number of anchors are fixed, while also making the ratio $\frac{d_{\mathbf{C}}-2\gamma}{2L_{\epsilon}}$ as large as possible. It should be noted that the values of $d_{\mathbf{C}}$ and $L_{\epsilon}$ are correlated. For example, we can always multiply by a constant $\alpha<1$ to change the Lipschitz constant to $\alpha L_{\epsilon}$, but at the same time, the parameter $d_{\mathbf{C}}$ and $\gamma$ also becomes $\alpha d_{\mathbf{C}}$ and $\alpha\gamma$ respectively, which implies that the ratio remains unchanged. 

  In theory, the selection of anchors is independent of CNN $\epsilon$. However, it's worth noting that as $\epsilon$ changes, the model's ability to capture features of high-quality images also changes, affecting the selection of anchor points. Having a large distance $d_{\mathbf{C}}$ between anchor points is not ideal, as it affects both the Lipschitz constant of the CNN, as mentioned above, and the distribution of anchors in the space $\sS$.

  Similarly, we observe that having a Lipschitz constant $L_\epsilon$ that is too small is also not ideal, as it may cause the distances between images of different content to be too small, leading to decreased robustness of the model. An ideal scenario is where the distances between images of the same content are compressed while the distances between images of different content are relatively large. Therefore, we choose to impose reasonable requirements on the distribution of anchor points to train and improve the effectiveness of our model.

  Due to the characteristics of images, in practical applications, we often process each part of the image locally. Therefore, we apply the above method to each part to achieve more accurate results, and each part can be regarded as a whole. Therefore, our above argument still holds in this case.

\section{More implementation details}
\label{append_more_detail}
Each network component is displayed in Table \ref{table4}. In experiment, we have 4 downsample blocks. hence we have $m=4, f=16$ in table. 
\begin{table}[h]
\centering
\begin{tabular}{p{0.45\textwidth}|p{0.45\textwidth}}
\toprule
\multicolumn{1}{c|}{\textbf{Encoder}} & \multicolumn{1}{c}{\textbf{Decoder}} \\
\midrule
$x \in \mathbb{R}^{H \times W \times C}$ & $z_q \in \mathbb{R}^{h \times w \times n_z}$ \\
Conv2D $\to \mathbb{R}^{H \times W \times C'}$ & Conv2D $\to \mathbb{R}^{h \times w \times C''}$ \\
$m \times \{ \text{Residual Block, Downsample Block} \} \to \mathbb{R}^{h \times w \times C''}$ & Residual Block $\to \mathbb{R}^{h \times w \times C''}$ \\
Residual Block $\to \mathbb{R}^{h \times w \times C''}$ & Non-Local Block $\to \mathbb{R}^{h \times w \times C''}$ \\
Non-Local Block $\to \mathbb{R}^{h \times w \times C''}$ & Residual Block $\to \mathbb{R}^{h \times w \times C''}$ \\
Residual Block $\to \mathbb{R}^{h \times w \times C''}$ & $m \times \{ \text{Residual Block, Upsample Block} \} \to \mathbb{R}^{H \times W \times C'}$ \\
GroupNorm, Swish, Conv2D $\to \mathbb{R}^{h \times w \times n_z}$ & GroupNorm, Swish, Conv2D $\to \mathbb{R}^{H \times W \times C}$ \\
\bottomrule
\end{tabular}
\caption{High-level architecture of the encoder and decoder of our SOVQAE. The design of the networks follows the architecture presented in \cite{isola2017image} with no skip-connections. For the discriminator, we use a patch-based model. Note that $h = \frac{H}{2^m}, w = \frac{W}{2^m}$ and $f = 2^m$.}
\label{table4}
\end{table}

\end{document}